\documentclass{article}

\usepackage[preprint]{neurips_2026}

\usepackage{amsmath, amssymb, amsthm}
\usepackage{booktabs}
\usepackage[colorlinks=true, linkcolor=blue, citecolor=blue, urlcolor=blue, breaklinks=true, pdfborder={0 0 0}]{hyperref}
\usepackage{graphicx}
\usepackage{xcolor}
\usepackage{algorithm}
\usepackage{algpseudocode}
\usepackage{natbib}
\usepackage{microtype}
\usepackage{wrapfig}
\usepackage{enumitem}
\usepackage{tabularx}
\usepackage{makecell}
\usepackage[capitalize,noabbrev]{cleveref}

\definecolor{edgeblue}{HTML}{1F5AB5}

\newcommand{\nbias}{\mathrm{ENA}}

\newtheorem{lemma}{Lemma}

\newtheorem{remark}{Remark}

\title{When (and How) to Trust the Expert: Diagnosing Query-Time Expert-Guided Reinforcement Learning}

\author{
  Yann Berthelot \\
  UMR 9189 -- CRIStAL \\
  Universit\'e de Lille \\
  CNRS, Inria, Centrale Lille \\
  Lille, France \\
  Saint-Gobain Research Paris \\
  \texttt{yannberthelot1@gmail.com}
  \And
  Philippe Preux \\
  UMR 9189 -- CRIStAL \\
  Universit\'e de Lille \\
  CNRS, Inria, Centrale Lille \\
  Lille, France \\
  \texttt{philippe.preux@inria.fr}
  \And
  Riad Akrour \\
  UMR 9189 -- CRIStAL \\
  Universit\'e de Lille \\
  Inria, CNRS, Centrale Lille \\
  Lille, France \\
  \texttt{riad.akrour@inria.fr}
}

\begin{document}
\maketitle

\begin{abstract}

Many continuous-control problems ship with a competent but
suboptimal controller (a tuned PID, a hand-designed gait). A
growing family of methods uses such controllers as queryable
experts during RL, but each method has been proposed in isolation,
on a different benchmark, without imperfect-expert testing. We
harmonize the comparison on a shared SAC backbone, common HPO and
evaluation protocols, $100/50$ seeds per (env, method), and a
degradation sweep over expert undertuning, action bias, and
observation noise. The comparison surfaces three failure modes
single-paper evaluations miss: \textbf{(F1)} a critic blind spot
under argmax-plus-bootstrap that drags IBRL below no-expert SAC on
experts close to the no-expert-RL ceiling (\emph{RL-near-ceiling},
distinct from the absolute physical ceiling); \textbf{(F2)}
residual saturation on far-from-optimal experts; and \textbf{(F3)}
warm-start buffer poisoning that collapses training-time-handoff
methods under deployment-time expert undertuning. \emph{No single
method dominates}: each wins on one task-structure regime and
fails predictably elsewhere; on RL-near-ceiling experts (FourTank,
GlassFurnace) no query-time method clears the expert within our
$1\mathrm{M}$-step budget, leaving open whether this is a
fundamental wall or a budget effect. We convert the spread into a
testable decision rule keyed on three pre-training observables
(expert quality, task termination, perturbation type). The
benchmark, taxonomy, and decision rule are the primary
contribution; we additionally describe \textbf{EDGE}, a
softmax-over-ensemble-LCB design point used to demonstrate that
both axes the taxonomy points to (gate form, scoring rule) are
individually exploitable.

\end{abstract}

\section{Introduction}
\label{sec:intro}

Industrial control, locomotion, and trajectory-tracking tasks
rarely ship without a working baseline: a hand-engineered feedback
law (PID \citep{astrom2005amigo}, LQR \citep{kalman1960lqr},
MPC \citep{rawlings2017mpc}), a motion planner, a central pattern
generator (CPG \citep{ijspeert2008cpg}), or a behavior-cloned
policy. These experts encode useful knowledge but are rarely
optimal, and they are typically tuned on a nominal model that does
not match the deployed system: the sim-to-real gap and modeling
error make the shipped controller a non-trivial floor that RL is
asked to surpass. The practitioner's question is therefore not
\emph{whether} such an expert can help RL, but \emph{how}: which
integration mechanism extracts the most value from it, and under
what failure modes.

Each query-time-expert method (IBRL \citep{hu2023ibrl}, the JSRL
family \citep{uchendu2023jsrl}, Residual SAC
\citep{johannink2019residual}, offline-demonstration relatives in
\cref{sec:related}) was proposed in isolation, on a different
benchmark, against a different expert. Two gaps follow for a
practitioner choosing among them: no head-to-head comparison on a
common protocol, and sparse coverage of robustness to expert
imperfection. We address both.

\textbf{Contributions.}
\begin{itemize}[leftmargin=1.2em,itemsep=2pt,topsep=2pt]
\item A common-protocol benchmark of five expert-guided methods
(IBRL, two JSRL variants, Residual SAC, EDGE) on four tasks, with a
shared SAC backbone, shared HPO, $100$ seeds per (env, method) on
control tasks and $50$ on locomotion, and a degradation sweep
covering expert undertuning, action bias, and observation noise.
Two of the four tasks (Plane3DCircle, GlassFurnace) are released
with this paper as new continuous-control benchmarks; they target
regimes underrepresented in expert-guided RL evaluations
(terminating crash dynamics with a structurally inadequate PID,
and long-horizon thermal MIMO control respectively).
\item Three mechanism-specific failure modes (F1--F3,
Section~\ref{sec:failuremodes}) predicting which method fails on
which task structure; F1 and F3 only become visible after
harmonization.
\item A task-structural decision rule
(Section~\ref{sec:when-to-use}) mapping termination, expert quality,
and expected deployment perturbation to a method choice, framed as
testable guidance.
\item Secondarily, \textbf{EDGE}: a quality-aware pessimism-gated
mixing design point (Section~\ref{sec:method}) showing that the
two design axes the taxonomy identifies (gate form, scoring rule) are
each individually exploitable. Competitive on every task and
co-best with JSRL-curriculum under expert-undertuning, offered as a design
probe, not a SOTA claim.
\end{itemize}

\section{Background and Notation}
\label{sec:problem}

This work studies expert-guided RL methods and contributes a new one
(EDGE, \cref{sec:method}) inspired by Thompson sampling. We briefly
recap both before the experiments.

\paragraph{Expert-guided RL.}
We consider a discounted continuous-control Markov decision problem (MDP)
$(\mathcal{S}, \mathcal{A}, T, R, \gamma)$ with state space
$\mathcal{S} \subseteq \mathbb{R}^n$, action space
$\mathcal{A} = [-1,1]^d$, transition function $T: \mathcal{S} \times
\mathcal{A} \to \mathcal{S}$, and reward $R: \mathcal{S} \times
\mathcal{A} \to \mathbb{R}$. Alongside the MDP we are given a
deterministic \emph{expert controller} $\pi^{\mathrm{exp}} : \mathcal{S} \to
\mathcal{A}$, typically a hand-engineered PID, LQR, MPC, or pattern
generator, that can be queried at any state visited during training.
The agent's objective is the standard $J(\pi) = \mathbb{E}\left[
\sum_{t=0}^\infty \gamma^t R(s_t, a_t)\right]$. We denote the
expert's return as $J_{\mathrm{exp}}$, the no-expert SAC baseline's
return at convergence as $J_{\mathrm{RL}}$ (the best non-expert
floor on each task in our benchmark, used as the denominator of
expert-quality below), and the learner as $\pi_\theta$.

\paragraph{Thompson sampling.}
For a $K$-armed bandit with posterior $p_t(\theta)$ over reward
parameters at step $t$, Thompson sampling (TS)
\citep{russo2014ts,russo2018tutorial} draws $\tilde\theta_t \sim
p_t$ and plays $a_t = \arg\max_{a \in [K]} Q(a;\tilde\theta_t)$.
Pessimistic variants \citep{wu2016conservative,kazerouni2017conservative}
replace the sample by a lower-confidence-bound (LCB) surrogate
$\check Q(a; p_t) = \mu(a; p_t) - \kappa\,\mathrm{spread}(a; p_t)$,
where $\mu(a; p_t)$ is the posterior mean reward of arm $a$ under
$p_t$ and $\mathrm{spread}(a; p_t)$ is a posterior dispersion
functional (e.g.\ standard deviation, IQR, or ensemble range), and
the confidence-scaling coefficient $\kappa \geq 0$ controls how
aggressively low-uncertainty arms are preferred.

\section{Related Work}
\label{sec:related}

\paragraph{Benchmarking expert-guided RL.}
No existing benchmark provides a head-to-head common-protocol
comparison of query-time-expert RL methods. Each method we compare
was introduced in a separate paper, with self-chosen baselines,
tasks, expert qualities, and HPO budgets. Adjacent benchmarks
target different settings: D4RL \citep{fu2020d4rl} and Robomimic
\citep{mandlekar2021robomimic} standardize \emph{offline} RL on
fixed demonstration datasets; rliable \citep{agarwal2021rliable}
supplies aggregation tools (which we adopt) but is not itself a
benchmark of expert-guided methods.

\paragraph{Expert-leveraging methods.}
We compare three families of query-time expert use: argmax-action
selection (IBRL \citep{hu2023ibrl}; the recent DRLR
\citep{shen2026drlr} adds a critic-side regularizer on top of
IBRL, orthogonal to the gate-form vs scoring-rule axes of our
taxonomy and evaluated on a different task suite, so we do not
re-run it head-to-head), in-episode handoff with a decaying
horizon (JSRL family \citep{uchendu2023jsrl}, curriculum and
warm-start variants), and additive correction (Residual SAC
\citep{johannink2019residual}).
Adjacent families consuming a fixed-dataset rather than a callable
expert (DAgger \citep{ross2011dagger}, AWAC \citep{nair2020awac},
IQL \citep{kostrikov2022iql}, PEX \citep{zhang2023pex}, LOGO
\citep{rengarajan2022logo}) need different fairness conventions and
are out of scope.

\paragraph{Two related-but-not-included neighbors.}
A \emph{Parameterized-Expert-Policy} baseline (RL actor modulates
the controller's parameters rather than mixing actions) is
structurally distinct: its expressiveness is bounded by the
controller's parametric form (a CPG, even with per-step
modulation, cannot express arbitrary actuator patterns), so any
comparison conflates the integration mechanism with a different
policy class. Seeding such a parameterization from our experts is
a useful follow-up. \emph{RLPD} \citep{ball2023rlpd} is the
natural static-buffer comparison; we exclude it as a scoping
artifact (post-dated our baseline freeze), not for a principled
reason.

\section{Expert-Driven Guided Exploration (EDGE)}
\label{sec:method}

\paragraph{Motivation: critic blind spot under argmax.}
IBRL-style argmax selection (\cref{tab:baselines}) between expert
and policy actions writes expert transitions to the replay buffer
whenever the expert wins in $Q$, anchoring the critic on the expert
action but not on the policy's own. The SAC
actor reads $Q_\phi(s, \pi_\theta(s))$, so it optimizes against a
target that stays unanchored on the expert-dominant set, which we
call the \emph{critic blind spot}. EDGE replaces argmax with a
state-dependent stochastic gate, so every visited state keeps
receiving fresh policy-action Bellman targets at positive rate
$(1-p_t(s))\,\mu_t(s)$, where $\mu_t(s)$ is the on-policy state-visit
measure under the behavior policy at step $t$, and no persistent
blind-spot region can form.
Formal Remark + Coverage Lemma in \cref{app:proofs}.

\subsection{Algorithm}

EDGE is a minimal modification to any off-policy actor-critic
algorithm: it changes only the action-selection rule and the input
to the actor and critic, leaving the gradient updates untouched.
\begin{wrapfigure}[18]{r}{0.66\textwidth}
\vspace{-2em}
\begin{minipage}{0.66\textwidth}
\begin{algorithm}[H]
\caption{EDGE. {\color{edgeblue} Blue} = modifications to base SAC.}
\label{alg:edge}
\begin{algorithmic}[1]
\State \textbf{Inputs:} steps $T$, ensemble size $N$, discount $\gamma$;
{\color{edgeblue} pessimism $\kappa\!\geq\!0$, gate temperature $\tau\!>\!0$; expert $\pi^{\mathrm{exp}}$ with internal state $z$.}
\State Init $\pi_\theta$, critics $\{Q_{\phi_n}\}_{n=1}^N$, targets $\{\bar Q_{\bar\phi_n}\}_{n=1}^N$, buffer $\mathcal{B}$.
\State Reset env to $s_0$; {\color{edgeblue} init expert state $z_0$.}
\State {\color{edgeblue} $\tilde s_0 \leftarrow [s_0, z_0]$}
\For{$t = 0, \ldots, T-1$}
    \State {\color{edgeblue} $\check{Q}(\tilde s_t, a) \!=\! \min_n Q_{\phi_n}(\tilde s_t, a) \!-\! \kappa(\max_n Q_{\phi_n} \!-\! \min_n Q_{\phi_n})$}
    \State {\color{edgeblue} $p_t = \sigma\big((\check{Q}(\tilde s_t, \pi^{\mathrm{exp}}(s_t, z_t)) - \check{Q}(\tilde s_t, \pi_\theta(\tilde s_t)))/\tau\big)$}
    \State {\color{edgeblue} $b_t \sim \mathrm{Bernoulli}(p_t)$}
    \State $a_t \leftarrow$ {\color{edgeblue}$b_t\,\pi^{\mathrm{exp}}(s_t, z_t) + (1-b_t)\,$}$\pi_\theta(\tilde s_t)$
    \State Execute $a_t$; observe $r_t, s_{t+1}$; {\color{edgeblue}update $z_{t+1}$.}
    \State {\color{edgeblue} $\tilde s_{t+1} \leftarrow [s_{t+1}, z_{t+1}]$}
    \State Store $(\tilde s_t, a_t, r_t,$ {\color{edgeblue}$\tilde s_{t+1}$}$)$ in $\mathcal{B}$.
    \State Update $\pi_\theta$, $\{\phi_n\}_{n=1}^N$, $\{\bar\phi_n\}_{n=1}^N$ via base SAC.
\EndFor
\end{algorithmic}
\end{algorithm}
\end{minipage}
\end{wrapfigure}
Two pieces compose EDGE: (a) a quality-aware pessimism gate
that selects between expert and policy at every step, and (b) an
expert-state augmentation $[s, z]$ exposing the expert's internal
controller state (PID integrator state, CPG phase) to actor and
critic. The augmentation is applied uniformly to every
expert-using method (IBRL, both JSRL variants, Residual SAC,
EDGE), as a property of the comparison protocol, not a private
EDGE knob; the \texttt{no\_state\_aug} ablation
(\cref{app:ablation}) measures its contribution at fixed protocol.
Unlike DAgger \citep{ross2011dagger} which labels visited states
with the expert's action and trains by behavior cloning against those
labels, our replay tuple records the executed action with its realized
reward and next state, never the expert's label, so the learner is
not capped at the expert's behavior and can eventually exceed it.

EDGE wraps any off-policy actor-critic algorithm by changing only the
behavior policy and the actor/critic input; gradient updates remain
those of the base learner. The wrapper introduces three modifications,
marked blue in \cref{alg:edge}: pessimism-gated stochastic mixing
(vs.\ IBRL argmax, testing \cref{rem:blindspot}),
expert-state-augmented input $[s, z]$, and two new hyperparameters
$\kappa, \tau$ tuned per task by HPO (\cref{app:hpo}). Each axis is
ablated in \cref{app:ablation}. SAC is the only base learner we
instantiate, so ``EDGE'' refers to ``EDGE on top of SAC'' throughout
the paper. Concretely, EDGE acts under the expert-mixed behavior
policy
\begin{equation}
    \beta_t(a\mid s) \;=\; p_t(s)\,\delta_{\pi^{\mathrm{exp}}(s)}(a)
    \;+\;(1-p_t(s))\,\pi_\theta(a\mid s),
    \quad
    p_t(s) \;=\; \sigma\!\Big(\tfrac{\check{Q}_t(s,\pi^{\mathrm{exp}}(s)) -
                              \check{Q}_t(s,\pi_\theta(s))}{\tau}\Big),
    \label{eq:behavior}
\end{equation}
where $\sigma(x) = (1 + e^{-x})^{-1}$ is the logistic
(sigmoid) function, $\delta_{\pi^{\mathrm{exp}}(s)}(\cdot)$ is the point mass on
the expert action $\pi^{\mathrm{exp}}(s)$ (the expert is assumed deterministic
per \S2; extension to stochastic experts is straightforward, see
\cref{app:limitations_extra}), and
$\check{Q}_t(s,a) = Q^{\min}_\phi(s,a) - \kappa\,(Q^{\max}_\phi(s,a)
- Q^{\min}_\phi(s,a))$ is the ensemble-pessimistic Q-estimate (the
critic-ensemble minimum, further penalized by ensemble spread,
adapting the disagreement signal of offline-RL value penalties
\citep{an2021edac,ghasemipour2022msg} and ensemble exploration
\citep{osband2016bootstrappeddqn,lee2021sunrise} to an
expert-vs-policy mixing decision). The replay buffer stores
$(s, a, r, s')$
tuples with $a \sim \beta_t$; $0 < p_t(s) < 1$ at every state with
non-degenerate critic disagreement, so the gate preserves full
support over both arms. Equivalently, the gate is a fixed-bandwidth
Gaussian approximation to pessimistic Thompson sampling on the
2-armed $\{\pi^{\mathrm{exp}}(s),\pi_\theta(s)\}$ bandit: $\check Q$ replaces the
posterior draw, $\tau$ replaces the sample variance, and the Gaussian
approximation compensates for the under-dispersion of small critic
ensembles \citep{osband2016bootstrappeddqn,an2021edac,ghasemipour2022msg}
that would otherwise collapse a literal sample-based TS to greedy.
The \texttt{literal\_thompson} ablation (\cref{app:ablation})
replaces the softmax-LCB gate with literal Gaussian-parametric
Thompson sampling on the same critic ensemble: on Plane3DCircle
EDGE reaches IQM $\approx 6{,}230$ while literal TS plateaus at
$\approx 4{,}265$ at $300\mathrm{k}$ steps, $50$ seeds. Scaling the
critic ensemble to $N{=}10$ leaves the gap essentially unchanged
($4{,}190$), ruling out under-dispersion as the operative
mechanism: the load-bearing piece is the pessimism shift
($-\kappa\sigma$) shared by both arms in LCB scoring, not the
softmax-vs-Thompson choice. The four-corner sweep over gate form
and scoring rule reported in \cref{app:ablation} corroborates this:
swapping min-$Q$ for LCB lifts both gates substantially while
swapping argmax for softmax adds a smaller increment on top.\label{sec:bandit}

\section{Experiments}
\label{sec:experiments}

The experiments answer four concrete questions about query-time-expert
RL methods on a common protocol.
\textbf{(Q1, common-protocol comparison)} On a shared backbone, HPO
budget, expert, and seed protocol, which expert-guided method wins on
which task, and by how much over the no-expert SAC and expert-only
floors? \textbf{(Q2, failure modes)} Do the predicted failure modes
F1, F2, F3 (defined in \cref{sec:failuremodes}) actually bind on the
regimes the taxonomy targets, and are they distinguishable from
generic underperformance? \textbf{(Q3,
robustness to expert imperfection)} How does each method degrade
under expert undertuning, action bias, and observation noise; do the
clean-protocol rankings reverse? \textbf{(Q4, EDGE design choices)}
Are the two EDGE modifications (pessimism-gated mixing, expert-state
augmentation) each necessary, and how sensitive is performance to the
two added hyperparameters? Q1--Q4 are answered in
\cref{sec:main-results,sec:failuremodes,sec:sensitivity,sec:design-ablation};
the per-axis ablation table backing Q4 is in \cref{app:ablation}.
Throughout this section, the primary metric is the expert-normalized
advantage $\mathrm{ENA} = (J - J_{\mathrm{exp}})/(J_{\mathrm{ref}} -
J_{\mathrm{exp}})$ (formal definition and per-task $J_{\mathrm{ref}}$
in \cref{sec:protocol}, \cref{eq:nbias}; sign convention:
$\mathrm{ENA}\!=\!0$ matches the expert, $\mathrm{ENA}\!=\!1$ matches
a per-task physical ceiling). Code, per-(env, method) hyperparameters, and raw
result JSONs are released as three GitHub repositories covering the task
suite, the RL framework with all five expert-guided methods, and the
experiment harness (\cref{app:reproducibility}).

\subsection{Setup}

\subsubsection{Benchmark Tasks}

Four tasks were chosen to span the regions of expert-quality and
task-structure space that the failure-mode taxonomy
(\cref{sec:failuremodes}) predicts to bind:

\begin{itemize}[leftmargin=1.3em,itemsep=1pt,topsep=2pt]
\item \textbf{FourTank} (PID-controlled MIMO setpoint tracking,
near-MPC-tuned expert): near-ceiling expert, non-terminating; F1-prone.
\item \textbf{Plane3DCircle (new)} (PID-controlled 3D aircraft
trajectory tracking, weak expert): terminating with catastrophic
failure; F2-prone.
\item \textbf{GlassFurnace (new)} (PID-controlled thermal process,
mid-quality expert): long thermal horizon, non-terminating;
F3-prone.
\item \textbf{CheetahRun}: non-terminating CPG-controlled locomotion
with a weak open-loop expert.
\end{itemize}

Each task ships with a reproducibly tuned expert; tuning protocols
and per-env dynamics in \cref{app:tasks,app:tuning}.

\subsubsection{Baselines}
\label{sec:baselines}

All methods share a common hyperparameter optimization (HPO)
protocol (\cref{app:hpo}) and the same base off-policy learner,
Soft Actor-Critic (SAC) \citep{haarnoja2018sac}, so the only varied
axis is the expert-integration mechanism. All methods feed their
actor and critic networks a running-mean/std normalized observation
(implementation detail required for numerical stability on
large-magnitude observations; full discussion in
\cref{app:hpo}). \cref{tab:baselines} lists the six methods compared
in the main body; the equal Optuna \citep{akiba2019optuna} trial
budget across methods is discussed as a methodological caveat in
\cref{sec:limitations}.

\begin{table}[h]
\centering
\small
\renewcommand{\arraystretch}{1.18}
\resizebox{\textwidth}{!}{
\begin{tabular}{lll}
\toprule
\textbf{Method} & \textbf{Mechanism} & \textbf{Tested axis vs. EDGE} \\
\midrule
SAC                  & off-policy actor-critic, no expert       & expert vs.\ no expert \\
Expert               & tuned PID / CPG, no RL                   & RL vs.\ controller-only \\
IBRL                 & argmax over $\{Q(s,\pi^{\mathrm{exp}}),Q(s,\pi_\theta)\}$ + expert bootstrap & argmax vs.\ stochastic gate \\
JSRL-curriculum      & \makecell[l]{per-episode handoff at in-episode step $H_t$,\\$H_t$ decayed from episode horizon to $0$ over training} & hard handoff vs.\ stochastic gate \\
JSRL-training-time (\emph{JSRL-tt}) & expert for first $\rho_{\mathrm{warm}} T$ steps, then policy & curriculum vs.\ warm-start \\
Residual SAC         & $a = \mathrm{clip}(a^e + a_{\mathrm{res}}, -1, 1)$       & additive vs.\ mixed action \\
\midrule
\textbf{EDGE (ours)}                    & pessimism-gated stochastic mixing \eqref{eq:behavior}   & --- \\
\bottomrule
\end{tabular}
}
\caption{Baselines compared in the main body. Each row lists the
method, the mechanism by which it consumes the expert at every visited
state, and the design axis it isolates against EDGE. All expert-using
methods share the same SAC backbone (\cref{sec:baselines}); the only
varied axis across rows is the expert-integration mechanism. IBRL's
expert-action TD bootstrap is ablated separately in
\cref{app:ablation}. The \emph{Expert} row is the controller-only
floor (PID for setpoint tasks, CPG for locomotion).}
\label{tab:baselines}
\end{table}

\subsubsection{Evaluation Protocol}
\label{sec:protocol}

Final phase uses $100$ seeds per configuration on the three control
tasks and $50$ seeds on CheetahRun.\footnote{Lower CheetahRun seed
count: parallel-seed throughput is GPU-RAM-bound on the locomotion, due to observation size in the replay buffer.
task.} Training is $1\mathrm{M}$ environment steps. Per-seed scalars
are the mean episodic return over the last $200\mathrm{k}$ steps;
across-seed aggregation is the IQM with $95\%$ stratified bootstrap
CIs.

\paragraph{Expert-Normalized Advantage (ENA).} The single primary
metric used throughout the paper (figures, headline tables) is the
expert-normalized advantage
\begin{equation}
\nbias \;=\; \frac{J - J_{\mathrm{exp}}}{J_{\mathrm{ref}} - J_{\mathrm{exp}}},
\label{eq:nbias}
\end{equation}
where $J_{\mathrm{exp}}$ is the expert controller's mean return and
$J_{\mathrm{ref}}$ is a fixed task-specific reference scale equal to
the per-task episodic-return ceiling (per-step reward upper bound
times episode length): $10{,}000$ on Plane3DCircle, $500$ on FourTank,
$5{,}760$ on GlassFurnace,
$1{,}000$ on CheetahRun. The ceilings are physical upper bounds, not
empirical scores, and do not depend on any method evaluated in this
paper. Sign convention: $\mathrm{ENA}=0$ matches the expert,
$\mathrm{ENA}=1$ matches the reference scale, $\mathrm{ENA}>1$ exceeds
it, $\mathrm{ENA}<0$ underperforms the expert. ENA is invariant to
per-task reward translation/scaling and lets us aggregate across tasks
with very different absolute returns.

\paragraph{Evaluation criterion.} A candidate method must
\emph{simultaneously} exceed both $J_{\mathrm{exp}}$ and $J_{\mathrm{RL}}$
on every task where either baseline leaves measurable headroom; the
expert is a floor (not a target) and the no-expert SAC baseline
guards against methods whose advantage comes from extra training
budget rather than expert use.

Statistical analysis has two layers: (1) an exploratory
Mann-Whitney $U$ battery vs.\ no-expert SAC with Holm-Bonferroni
correction (\cref{tab:main_perenv}); (2) three directional one-sided
confirmations of F1/F2/F3, with the sign fixed in advance by the
mechanism. Full protocols and the F3 caveat are in
\cref{app:significance,app:directional_tests}.

\subsection{Main Results}
\label{sec:main-results}

\textbf{No single expert-guided method dominates across task structures.}
The benchmark's value is in surfacing this spread under a common
protocol; per-mechanism analysis follows in \cref{sec:failuremodes}.
Robustness is in \cref{sec:sensitivity}; aggregation uses
rliable-style stratified bootstrap \citep{agarwal2021rliable}.

\paragraph{Asymptotic performance ($1\mathrm{M}$ steps).}
Residual SAC ties for the top on FourTank (near-ceiling expert, F1
territory) but collapses on Plane3DCircle (weak expert, F2 territory);
IBRL is dominated by no-expert SAC on FourTank (one-sided permutation,
$p{=}0.031$); JSRL-training-time tops Plane3DCircle clean but is the
most undertuning-fragile method. On FourTank and GlassFurnace
(RL-near-ceiling experts; expert at $98\%$ and $80\%$ of
$J_{\mathrm{RL}}$ respectively) no expert-using method beats the
no-expert SAC baseline by more than $\sim$$1\%$ \emph{within our
$1\mathrm{M}$-step budget}: the contribution from these two tasks
is \emph{negative}, delineating a regime where the entire
query-time-expert family does not yet help. We cannot distinguish
a fundamental wall from a training-budget effect at this horizon;
longer-horizon runs are a natural follow-up. \cref{tab:main_perenv}
reports raw per-env IQM with significance vs no-expert SAC;
\cref{fig:headline} reports the same data on the unified ENA scale.

\paragraph{Sample efficiency.}
On near-ceiling experts, \textbf{Residual SAC inherits the expert at
no learning cost}: a property of the bound on $a_{\mathrm{res}}$,
not a learned outcome. The residual correction collapses to near
zero on a near-optimal expert, so the deployed action matches the
expert from step zero; on FourTank Residual is at expert-parity
already at the first evaluation step while other methods need
hundreds of thousands of steps. The same bound saturates Residual
on weak experts (F2, \cref{sec:failuremodes}): inheritance and
saturation are two halves of one bound. Sample-efficiency rankings
track the asymptote ranking on the other tasks; per-method
first-permanent-crossing curves are in \cref{fig:sample_eff_appendix}.

\begin{table}[h]
\centering
\small
\renewcommand{\arraystretch}{1.15}
\begin{tabular}{lllll}
\toprule
\textbf{Method} & \textbf{P3D-Circle} & \textbf{FourTank} & \textbf{GlassFurnace} & \textbf{CheetahRun} \\
\midrule
SAC (baseline) & $-$6{,}099 & \phantom{+}243.1 & \phantom{+}5{,}406 & \phantom{+}666.9 \\
IBRL & \phantom{+}9{,}536$^{***}$ & \phantom{+}239.4$^{*}$ & \phantom{+}5{,}413$^{**}$ & \phantom{+}813.5$^{***}$ \\
JSRL (curriculum) & \phantom{+}9{,}291$^{***}$ & \phantom{+}239.0$^{*}$ & \phantom{+}5{,}405 & \phantom{+}817.5$^{***}$ \\
JSRL (training-time) & \phantom{+}9{,}651$^{***}$ & \phantom{+}237.2$^{***}$ & \phantom{+}5{,}409 & \phantom{+}773.5$^{***}$ \\
Residual SAC & \phantom{+}3{,}211$^{***}$ & \phantom{+}250.4$^{***}$ & \phantom{+}5{,}424$^{***}$ & \phantom{+}738.8$^{***}$ \\
\midrule
\textbf{EDGE (ours)} & \phantom{+}9{,}409$^{***}$ & \phantom{+}242.7 & \phantom{+}5{,}416$^{***}$ & \phantom{+}751.2$^{***}$ \\
\bottomrule
\end{tabular}

\caption{Per-env asymptotic IQM final return at $1\mathrm{M}$
training steps ($100$ seeds for control, $50$ for CheetahRun). All
cells are raw IQM in the task's native units; SAC (no expert) is
the anchor row. Significance markers on each non-SAC row are
two-sided Mann-Whitney $U$ tests of \emph{that method vs.\ SAC} on
the per-seed final-window scalars, Holm-Bonferroni-corrected
within env over the family of five expert-using methods
($^{*}{:}p_{\mathrm{corr}}{<}0.05$,
$^{**}{:}p_{\mathrm{corr}}{<}0.01$,
$^{***}{:}p_{\mathrm{corr}}{<}0.001$).
Methodology, per-method CIs, and ENA-normalized companion:
\cref{app:perenv,app:significance,fig:headline}.}
\label{tab:main_perenv}
\end{table}

\begin{figure}[h]
\centering
\includegraphics[width=\linewidth]{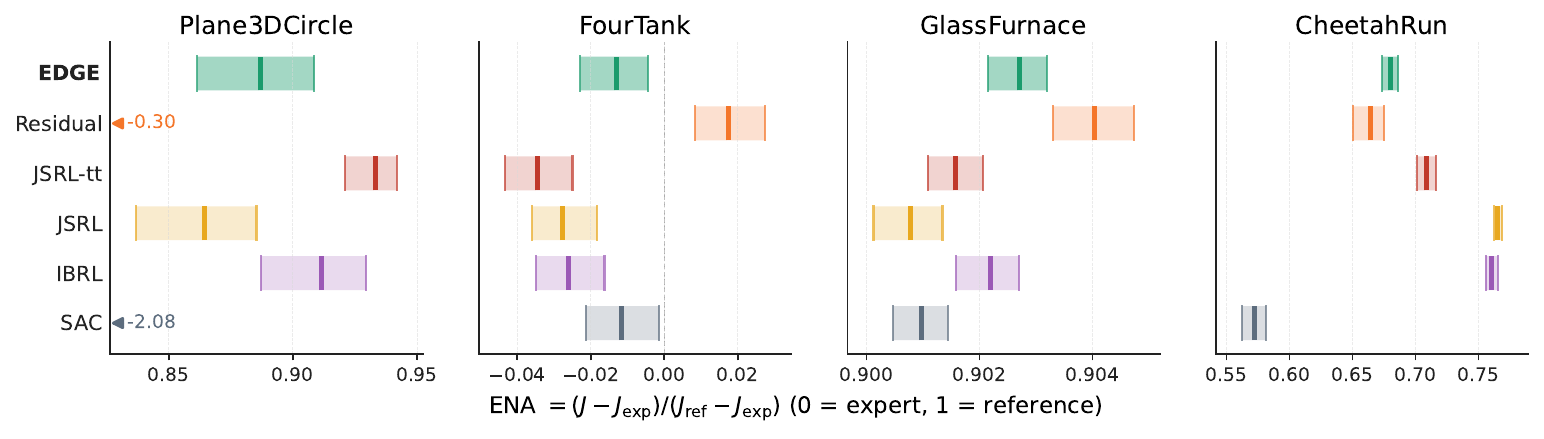}
\caption{Per-env asymptotic Expert-Normalized Advantage at
$1\mathrm{M}$ steps. ENA $= (J - J_{\mathrm{exp}})/(J_{\mathrm{ref}}
- J_{\mathrm{exp}})$, $0$ at the expert, $1$ at the per-task ceiling
(\cref{eq:nbias}). Tick: per-method IQM; rectangle: $95\%$
stratified bootstrap CI over trailing-$200\mathrm{k}$-step per-seed
returns. The Plane3DCircle panel auto-zooms to the cluster of
high-performers; SAC and Residual fall outside and appear as
labelled arrows (full-axis version: \cref{fig:headline_uncensored}).
Same data as \cref{tab:main_perenv}, normalized.}
\label{fig:headline}
\end{figure}

\subsection{Robustness to Expert Sub-Optimality}
\label{sec:sensitivity}

We sweep three perturbation types (intensities per type and method
in \cref{fig:degradation_body}): training-time \emph{expert
undertuning} (log-scale perturbation of the controller's calibrated
parameters), deployment-time \emph{expert action bias} (per-seed
constant offset on the expert's executed action), and deployment-time
\emph{observation noise} (per-step Gaussian noise on the agent's
observation). End-of-training final return is reported against the
original (unperturbed) expert. The annotated companion with per-bar
percent change is \cref{fig:degradation_appendix}.
Reading \cref{fig:degradation_body}: on Plane3DCircle, Residual SAC
is the only method that retains positive return under deployment-time
perturbations despite the lowest clean ceiling, an
asymptote-vs-robustness tradeoff invisible to the undertuning sweep
alone.

\begin{figure}[h]
\centering
\IfFileExists{figs/degradation_v2_bars_by_method_grid.pdf}{
  \includegraphics[width=0.92\linewidth]{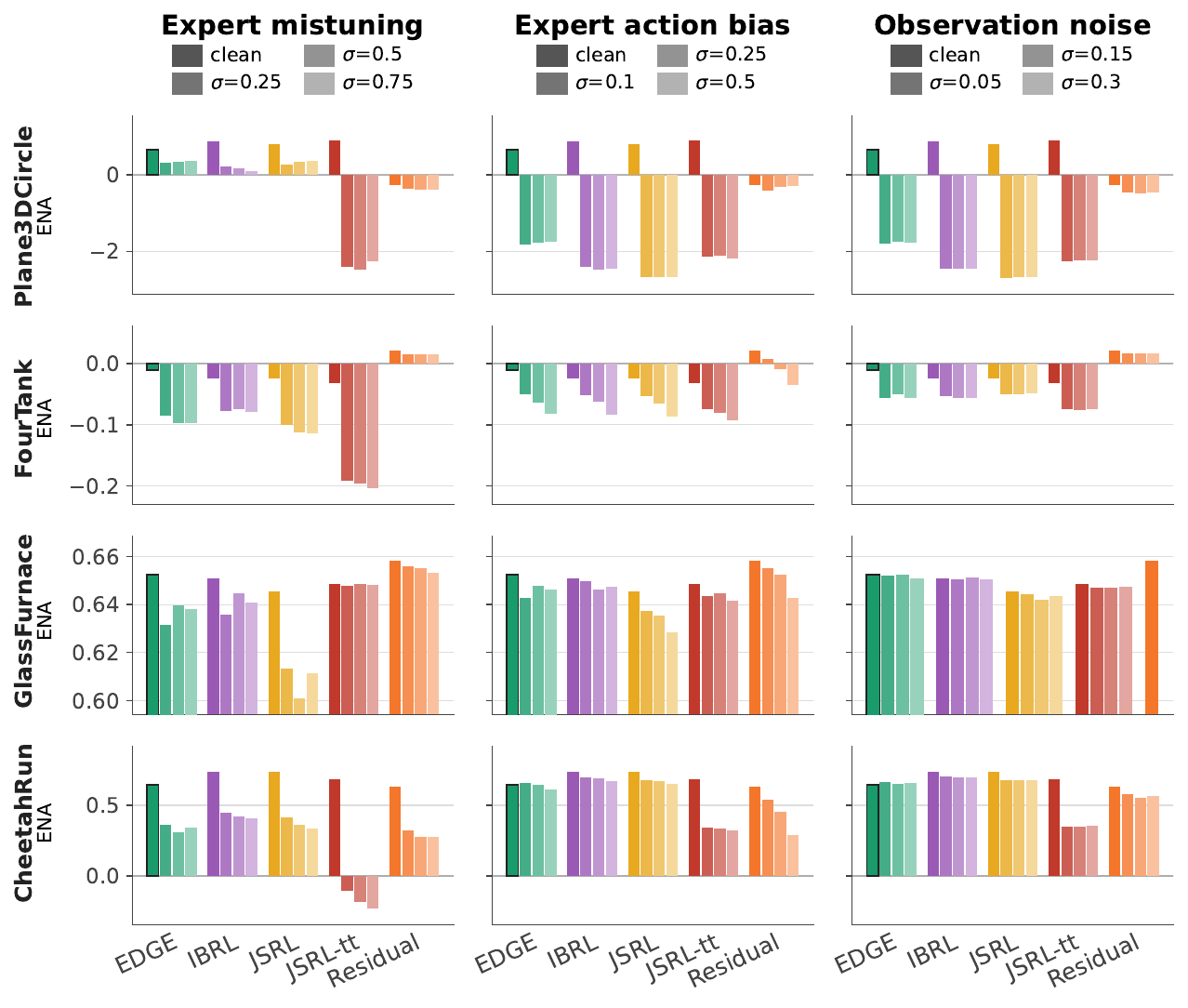}}{}
\caption{Per-environment perturbation sweep ($4\!\times\!3$ grid):
rows are environments, columns are perturbation types
(training-time \emph{expert undertuning}; deployment-time
\emph{expert action bias} and \emph{observation noise}). Each panel
groups the five expert-using methods plus no-expert SAC; for every
method, bars left-to-right are clean (darkest) and increasing $\sigma$.
Bar height is ENA at the perturbed configuration ($50$ seeds);
each row has its own $y$-axis. The annotated companion with per-bar
percent-change labels is \cref{fig:degradation_appendix}; raw-units
companion in \cref{fig:degradation_appendix_raw}.}
\label{fig:degradation_body}
\end{figure}

\subsection{EDGE Design Choices}
\label{sec:design-ablation}

We ablate $13$ EDGE variants per env on Plane3DCircle, CheetahRun,
and FourTank ($50$ seeds, $300\mathrm{k}$ steps; the shorter
horizon keeps all $39$ cells in budget. Full table and per-variant
axis map in \cref{app:ablation}). Both quality-aware modifications
are individually load-bearing on the discriminating env: removing
the softmax gate or the expert-state augmentation each costs
$\sim$$70\%$ of EDGE's return on Plane3DCircle, and an additional
LCB-gated TD bootstrap is approximately neutral. On CheetahRun and
FourTank the variant spread is small (every variant within seed
noise of EDGE except the two sanity-check breakers,
\texttt{random\_expert} and \texttt{store\_policy\_action}),
consistent with both tasks' small between-method spread on the
main results.

\section{Discussion}
\label{sec:discussion}

\subsection{A Taxonomy of Expert-Guided RL Failure Modes}
\label{sec:failuremodes}

The benchmark exposes three mechanism-specific failure modes of
expert-guided RL, each tied to a structural property of the expert
or the integration mechanism. F2 (residual saturation) is a
documented failure mode of additive-correction methods on weak
experts, consistent with prior reports; we keep it in the taxonomy
because the decision rule needs all three labels, but the novelty
here is in F1 and F3, both
of which are surfaced only by the harmonized comparison (F1: IBRL
$<$ no-expert SAC on a near-ceiling-expert task) and the degradation
sweep (F3: training-time-handoff collapse under expert undertuning) and
have not, to our knowledge, been previously named or directionally
tested. We identify each mode post-hoc from the regimes it binds on,
then state the mechanism, the empirical signature, and the
task-structural condition under which the same failure should be
expected to bind on held-out tasks.

\paragraph{F1: Critic blind spot (argmax-gating methods).}
IBRL combines argmax action selection with an expert-action TD
bootstrap (\cref{tab:baselines}). On the states where the expert
beats the policy in $Q$, both ingredients pull the critic toward
the expert action, so the critic never gets fresh on-policy
Bellman targets at the policy action even though the SAC actor
optimizes against it. On near-ceiling experts these states are
persistent, and IBRL falls \emph{below} no-expert SAC. The effect
is real but small. We confirm the predicted sign with a one-sided
permutation test on FourTank ($p{=}0.031$,
\cref{tab:main_perenv,app:directional_tests}); we run it as a single
directional test
keyed on the F1 prediction, not as part of the per-env
Mann-Whitney battery in \cref{tab:main_perenv}, where the
multiple-comparison correction would absorb it. The
\texttt{argmax\_lcb} ablation on FourTank keeps the argmax gate
but drops both the IBRL bootstrap and the min-$Q$ scoring: it
lands between IBRL and SAC, with the bootstrap accounting for
most of the F1 gap and the argmax gate accounting for the rest.
\emph{Anchor and replication.} F1 binds on FourTank
($q^{\mathrm{exp}}\!=\!98\%$ of no-expert SAC return). We also
ran the same one-sided test on GlassFurnace
($q^{\mathrm{exp}}\!=\!80\%$) where IBRL is numerically
\emph{above} no-expert SAC ($\Delta\!=\!+7.9$ IQM, $p{=}0.995$):
F1 does not bind there. Holding $q^{\mathrm{exp}}$ as our
predictor, the F1 binding regime is therefore tighter than
``near-ceiling'' in the loose sense; we narrow the held-out
condition to $q^{\mathrm{exp}}\!\gtrsim\!90\%$.

\paragraph{F2: Residual saturation (additive-correction methods).}
Residual SAC adds a bounded correction to the expert action and
clips back into the action box. The bound cuts both ways. On weak
experts the correction needed to escape the expert's structural
errors exceeds the bound, and Residual gets capped well below the
other methods regardless of training budget; on Plane3DCircle, the
gap to JSRL-tt, IBRL, and EDGE is each significant at
$p{<}10^{-4}$ (\cref{app:directional_tests}). On near-ceiling
experts the same bound flips into an advantage: the correction
collapses toward zero, Residual essentially inherits the expert
from step zero, and the other methods need hundreds of thousands
of steps to catch up. \emph{Anchors:} Plane3DCircle for the
saturation half, FourTank for the free-expert-parity half. F2 is a
tradeoff, not a unilateral failure mode.

\paragraph{F3: Warm-start buffer poisoning (training-time handoff).}
JSRL-training-time first runs the expert for a fixed warm-start
fraction of training, then hands off to the policy. The buffer
collected during warm-start reflects whichever expert was used at
training time, and the policy then has to learn from this buffer
with no further expert input. JSRL-tt has the strongest clean
asymptote in the suite but its return collapses under expert
undertuning. We confirm the predicted direction with a one-sided
permutation test on Plane3DCircle at $\sigma{=}0.5$: JSRL-tt's
per-seed final-window IQM is $-6{,}590$ against EDGE's $-4{,}745$
($\Delta\!=\!-1{,}845$, $p{<}10^{-4}$,
\cref{tab:directional_tests}). On CheetahRun JSRL-tt also loses
most of its clean score under the same perturbation, while no
other expert-using method shows a comparable drop.
\emph{Anchors:} CheetahRun and Plane3DCircle under undertuning; we
predict the same binding on any deployment with a noisy,
undertuned, or drifting training-time expert.

\subsection{Decision Rule: Which Method to Deploy When}
\label{sec:when-to-use}

The failure modes convert into a deployment-scenario decision rule
(\cref{tab:decision_rule}) keyed on three observables: expert
quality $q^{\mathrm{exp}} \equiv J_{\mathrm{exp}}/J_{\mathrm{RL}}$
(reported as a percentage in the table), task termination, and
expected perturbation type. The denominator is the practitioner's
main calibration cost: a $\sim$$10\%$ pilot SAC run, a published
return on a structurally similar task, or, absent either, the
conservative branch ($q^{\mathrm{exp}}$ unknown $\to$ treat as
low). $J_{\mathrm{RL}}$ here is the no-expert-SAC asymptote, not
ENA's $J_{\mathrm{ref}}$ physical ceiling: on FourTank
$q^{\mathrm{exp}}\!=\!98\%$ even though the expert hits only
$\sim$$49\%$ of $J_{\mathrm{ref}}$. ``RL-near-ceiling'' throughout
refers to the $J_{\mathrm{RL}}$ scale.

\begin{table}[h]
\centering
\footnotesize
\renewcommand{\arraystretch}{1.15}
\setlength{\tabcolsep}{4pt}
\begin{tabular}{@{}llccccccl@{}}
\toprule
\textbf{Scenario} & \textbf{Failure}
& \textbf{IBRL} & \textbf{JSRL-c} & \textbf{JSRL-tt}
& \textbf{Residual} & \textbf{EDGE} & \textbf{Anchor evidence} \\
\midrule
$q^{\mathrm{exp}}\!\gtrsim\!90\%$, non-term., clean   & F1     & $\times$ &          &          & $\checkmark$ &              & FT: IBRL$<$SAC ($p{=}0.031$) \\
$q^{\mathrm{exp}}\!\lesssim\!60\%$, non-term., clean  & F2     & $\checkmark$ & $\checkmark$ &      & $\times$    &              & CR: Residual $-9\%$ \\
Terminating, clean                          & F1+F2  &          & $\checkmark$ &      & $\times$    & $\checkmark$ & P3DC: Residual capped at $-64\%$ \\
\midrule
Undertuned training-time expert             & F3     &          & $\checkmark$ & $\times$ &          & $\checkmark$ & P3DC $\sigma{=}0.5$ undertune \\
Deployment action / obs noise               & ---    &          &          &          & $\checkmark$ &              & P3DC action-bias $\sigma{=}0.25$ \\
\bottomrule
\end{tabular}
\caption{Decision rule. \emph{Clean} = no expert undertuning and no
deployment-time noise; \emph{terminating} = episodes can end early
on failure (e.g.\ the aircraft crashing on Plane3DCircle).
$\checkmark$ recommends, $\times$ flags a known failure on the
anchor evidence, blank is unconstrained.}
\label{tab:decision_rule}
\end{table}

The two perturbation rows split cleanly: training-time undertuning
favors methods that keep falling back to the expert at every state
(JSRL-curriculum, EDGE); deployment-time noise favors Residual's
bounded correction. When both perturbations are expected together,
no method dominates on the regimes we have.

\subsection{Limitations}
\label{sec:limitations}

\paragraph{Scope.}
Four tasks (compute-bound). EDGE requires a queryable expert at
every state (fixed-dataset demos out of scope), dense rewards, and an
ensemble critic with non-degenerate disagreement. All methods get $30$ Optuna trials per task. EDGE has $14$ search
dimensions vs $10$--$11$ for baselines, so the per-dimension trial
budget is coarser for EDGE. This biases the comparison
\emph{against} our method: EDGE's reported numbers are a lower bound
on what budget-matched search would yield, not an artifact of an
inflated trial budget. Full scope, compute, and HPO discussion in
\cref{app:limitations_extra}.

\paragraph{When the expert is enough.}
On FourTank ($q^{\mathrm{exp}}\!=\!98\%$) and GlassFurnace
($80\%$) no expert-using RL method clears the expert at
$1\mathrm{M}$ steps; we cannot rule out that a longer horizon
($5{-}10\mathrm{M}$) would let the within-noise lag pull ahead.
Within our budget the honest call is to deploy the expert
directly. Closing the remaining gap to the physical ceiling on an
already-strong expert is left to future work.

\section{Conclusion}
\label{sec:conclusion}

In this work we ran a harmonized head-to-head comparison of
query-time-expert RL methods on a shared SAC backbone, common HPO,
and a four-task benchmark, with a degradation sweep covering
expert undertuning and deployment-time perturbations.
No single method dominates: each wins on one task-structure
regime and fails predictably elsewhere. F1/F2/F3
bind where the taxonomy predicts (IBRL $<$ SAC on FourTank,
Residual saturates on Plane3DCircle, JSRL-tt collapses under
expert undertuning), and clean rankings reverse under perturbation.
The benchmark, taxonomy, and decision rule
(\cref{sec:when-to-use}) are the primary contribution; EDGE is
offered as evidence that gate form and scoring rule are
individually exploitable design axes, not as a SOTA claim. Several extensions remain natural follow-ups: a hybrid that mixes
RLPD-style offline-plus-online replay with our online-mixing
gate, a stretch from deterministic continuous-control experts to
stochastic and discrete-action ones, and an evaluation of the same
gate-and-scoring decomposition layered on off-policy backbones
other than SAC.

\bibliographystyle{plainnat}
\bibliography{refs}

\appendix

\section{Critic Blind Spots: Extended Discussion and Proof}
\label{app:proofs}

We state and prove the two formal results behind the informal
``critic blind spot'' motivation in \cref{sec:method}.

\begin{remark}[Critic blind spot under argmax selection]
\label{rem:blindspot}
Let $\mathcal{M}_t = \{s : Q_\phi(s,\pi^{\mathrm{exp}}(s)) > \mathbb{E}_{a\sim
\pi_\theta(\cdot|s)} Q_\phi(s,a)\}$. Under IBRL-style argmax, no
fresh transition $(s, a, \cdot, \cdot)$ with $s\in\mathcal{M}_t$ and
$a\neq\pi^{\mathrm{exp}}(s)$ enters the buffer at step $t$. The Bellman regression
is therefore insensitive to $Q_\phi(s, \pi_\theta(s))$ on
$\mathcal{M}_t$ for as long as $\mathcal{M}_t$ persists.
\end{remark}

\begin{lemma}[Uniform coverage under bounded gate gap]
\label{lem:coverage}
Let $\Delta_t(s) := \check{Q}_t(s,\pi^{\mathrm{exp}}(s)) - \check{Q}_t(s,
\pi_\theta(s))$ be the gate gap on visited states, and suppose
$\sup_{s\in\mathrm{supp}(\mu_t)}\Delta_t(s) \le \Delta_{\max}$ at
step $t$. Then the gate \eqref{eq:behavior} with fixed bandwidth
$\tau$ satisfies the \emph{uniform} bound $p_t(s) \le p_{\max} :=
\sigma(\Delta_{\max}/\tau) < 1$, and policy-arm Bellman anchors
accumulate at rate at least $(1 - p_{\max})\mu_t(s)$ at every
visited $s$. Consequently, no analogue of $\mathcal{M}_t$ persists
on the visited support so long as the gap stays bounded.
\end{lemma}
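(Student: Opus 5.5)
The argument is a monotonicity computation followed by a bookkeeping step on the replay stream. I would not invoke \cref{rem:blindspot} beyond using its definition of $\mathcal{M}_t$ as the object to rule out.

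\textbf{Step 1 (uniform gate bound).} Since $\tau>0$ is fixed, the map $x\mapsto\sigma(x/\tau)$ is strictly increasing. Every $s\in\mathrm{supp}(\mu_t)$ satisfies $\Delta_t(s)\le\Delta_{\max}$, and by \eqref{eq:behavior} $p_t(s)=\sigma(\Delta_t(s)/\tau)$. Hence $p_t(s)\le\sigma(\Delta_{\max}/\tau)=p_{\max}$. Because $\sigma(x)<1$ for every finite $x$, we also get $p_{\max}<1$. The bound is uniform because $p_{\max}$ depends only on $(\Delta_{\max},\tau)$ and not on $s$. This is the only place the hypothesis enters, and no lower bound on $\Delta_t$ is needed.

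\textbf{Step 2 (anchor rate).} Under $\beta_t$, the step-$t$ transition is generated in two stages. First $s_t\sim\mu_t$ is drawn, then $b_t\sim\mathrm{Bernoulli}(p_t(s_t))$ independently given $s_t$, and the policy arm is executed when $b_t=0$. The joint probability (density, in continuous $\mathcal{S}$) that a policy-arm tuple $(s,a,r,s')$ with $a\sim\pi_\theta(\cdot|s)$ is written at $s$ is $(1-p_t(s))\mu_t(s)$. By Step 1 this is at least $(1-p_{\max})\mu_t(s)$. Each such tuple contributes a Bellman residual evaluated at a policy action in state $s$, i.e.\ it is a policy-arm anchor. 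Summing over steps $t'$ in any window where the hypothesis holds gives an expected anchor count of at least $(1-p_{\max})\sum_{t'}\mu_{t'}(s)$. The count therefore grows linearly whenever $s$ is revisited.

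\textbf{Step 3 (no persistent blind spot).} Compare with \cref{rem:blindspot}. There, every $s\in\mathcal{M}_t$ receives zero fresh policy-action transitions, since argmax deterministically selects the expert. Here, every visited state, including any $s$ where the expert arm scores higher ($\Delta_t(s)>0$), receives policy-arm transitions at a positive rate. So the Bellman regression remains sensitive to $Q_\phi(s,\pi_\theta(s))$ there, and no set with the defining property of $\mathcal{M}_t$ can persist on $\mathrm{supp}(\mu_t)$ while $\sup\Delta_t\le\Delta_{\max}$.

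\textbf{Main obstacle.} The difficulty is making ``persists'' and ``rate'' precise. $\mu_t$, $\pi_\theta$ and the critics all change with $t$, and in continuous $\mathcal{S}$, pointwise ``at every visited $s$'' really means a statement about densities or about measurable sets $A$. My first approach is to state the rate as a measure inequality: for every measurable $A$, the expected number of policy-arm tuples landing in $A$ at step $t$ is at least $(1-p_{\max})\mu_t(A)$. Persistence would then mean that $A$ with $\mu_t(A)>0$ receives zero such tuples over a window. That is impossible in expectation, and it becomes impossible almost surely over an infinite window by Borel--Cantelli (second form, conditional independence of the Bernoulli draws) whenever $\sum_t\mu_t(A)=\infty$.
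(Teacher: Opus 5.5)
Your Steps 1--2 are exactly the paper's proof: monotonicity of $\sigma$ transports the gap bound to $p_t(s)\le\sigma(\Delta_{\max}/\tau)=p_{\max}<1$ on $\mathrm{supp}(\mu_t)$, and the law of total probability over $\mu_t$ (your two-stage draw) yields the $(1-p_{\max})\mu_t(s)$ policy-arm anchor rate, while your Step 3 states the ``no persistent $\mathcal{M}_t$'' consequence at the same informal level the paper does. One caution on your optional Borel--Cantelli strengthening: along a trajectory the events $\{s_t\in A,\ b_t=0\}$ are not independent (only the $b_t$ are conditionally independent given the states), and the marginal condition $\sum_t\mu_t(A)=\infty$ does not force $A$ to be visited on a given run, so your almost-sure claim does not follow as stated; the correct version is L\'evy's conditional Borel--Cantelli applied to $\mathbb{P}(b_t=0,\,s_t\in A\mid \text{history},\,s_t)\ge(1-p_{\max})\mathbf{1}[s_t\in A]$, which gives infinitely many policy-arm tuples in $A$ almost surely on the event $\{\sum_t\mathbf{1}[s_t\in A]=\infty\}$.
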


\paragraph{Why the bias is mechanistic.} Whenever the expert wins at
state $s$, IBRL-style argmax stores $(s, \pi^{\mathrm{exp}}(s), r, s')$ in the
replay buffer, and the critic's Bellman regression target on that
tuple anchors $Q_\phi(s, \pi^{\mathrm{exp}}(s))$, not $Q_\phi(s, a)$ for
$a \neq \pi^{\mathrm{exp}}(s)$. The SAC actor update reads
$\nabla_\theta \mathbb{E}_{s \sim \mathcal{B},\, a \sim \pi_\theta(\cdot
| s)}[Q_\phi(s, a) - \alpha \log \pi_\theta(a|s)]$: the stored action
does not enter the gradient directly, but the gradient's quality
depends on $Q_\phi(s, a)$ being calibrated for the actions
$\pi_\theta$ actually proposes. Older transitions written before
$\mathcal{M}_t$ took its current shape may anchor the policy arm,
but that anchor freezes as soon as the gap-sign flips:
$Q_\phi(s, \pi_\theta(s))$ on $\mathcal{M}_t$ is then determined by
stale data and function-approximator generalization, while it is
exactly the quantity the actor improves against. The failure is
mechanistic, not statistical: even with unlimited environment data,
pure argmax records no fresh transition whose Bellman target updates
$Q_\phi$ at policy-proposed actions on $\mathcal{M}_t$.

\begin{proof}[Proof of \cref{lem:coverage}]
Conditional on the state, the gate is a Bernoulli draw with parameter
$p_t(s) = \sigma(\Delta_t(s)/\tau)$. The logistic $\sigma$ is monotone
increasing, so the gap upper bound transports to
$p_t(s) \le \sigma(\Delta_{\max}/\tau) =: p_{\max} < 1$ for every
$s\in\mathrm{supp}(\mu_t)$. The per-step rate of policy-action
Bellman anchors at $s$ is therefore at least $(1 - p_{\max})\mu_t(s)$
by the law of total probability over $\mu_t$, uniformly in $s$.
\qedhere
\end{proof}

\paragraph{Where the bound enters and where it can fail.} The
hypothesis $\Delta_t(s) \le \Delta_{\max}$ is what converts pointwise
positivity ($p_t(s) < 1$ by the logistic) into a uniform rate that
rules out exponentially-vanishing policy-arm coverage. With unbounded
rewards or critic outputs, $\Delta_t$ can grow without bound on a
state where the expert wins by an arbitrary margin and $p_t(s)\to 1$
becomes possible at any rate. In our benchmark $r$ is bounded and the
critic ensemble is regularized, so the hypothesis holds with
$\Delta_{\max}$ on the order of the per-task return scale; an
empirical estimate of $p_{\max} = \max_{s\in\mathcal{B}}p_t(s)$ over
recent buffer samples is the obvious diagnostic and is the quantity
one would log to verify the lemma's hypothesis at training time.

\section{Per-Environment Results}
\label{app:perenv}

This appendix reports per-environment training curves, expert-bias
trajectories, the uncensored per-env headline, and the cross-env
aggregated rliable view as a sanity check against
\cref{fig:headline} and \cref{tab:main_perenv}.

\begin{figure}[h]
\centering
\includegraphics[width=\linewidth]{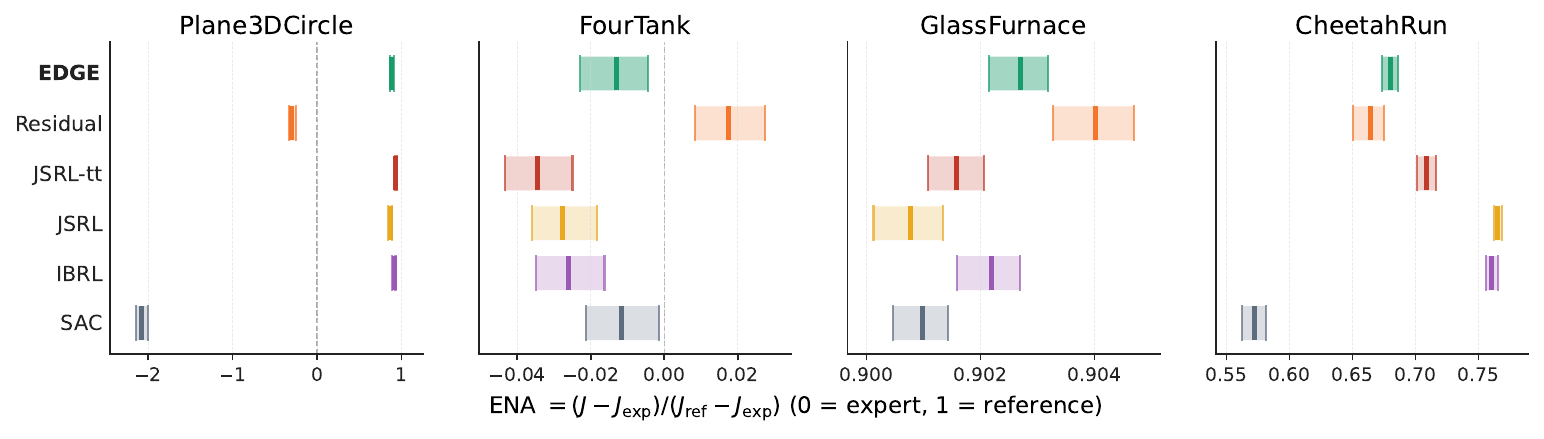}
\caption{Uncensored per-env headline (companion to
\cref{fig:headline}). The Plane3DCircle panel is shown on its
full y-axis: SAC reaches $-6{,}099$ and Residual SAC $3{,}376$, both
of which dwarf the spread among the four top methods when shown
together. The censored body figure trades full disclosure for visual
resolution on the comparison that matters for the F1/F2/F3 story.}
\label{fig:headline_uncensored}
\end{figure}

\begin{figure}[h]
\centering
\includegraphics[width=0.92\linewidth]{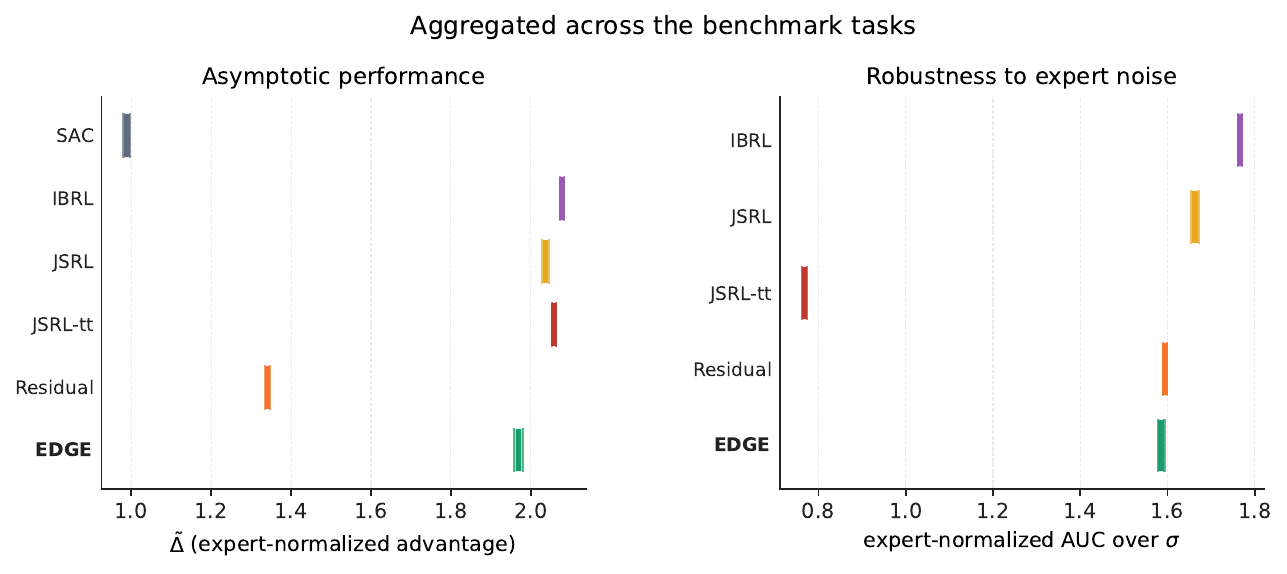}
\caption{Cross-env aggregated rliable view: per-method IQM (tick)
and stratified-bootstrap IQR (rectangle), aggregated across envs.
Right panel: mean $\nbias$ over the undertuning $\sigma$ sweep.
Demoted from the body because the cross-env aggregate hides the
F1/F2/F3 mechanism story visible in \cref{fig:headline}.}
\label{fig:aggregated}
\end{figure}

\begin{figure}[h]
\centering
\includegraphics[width=0.48\linewidth]{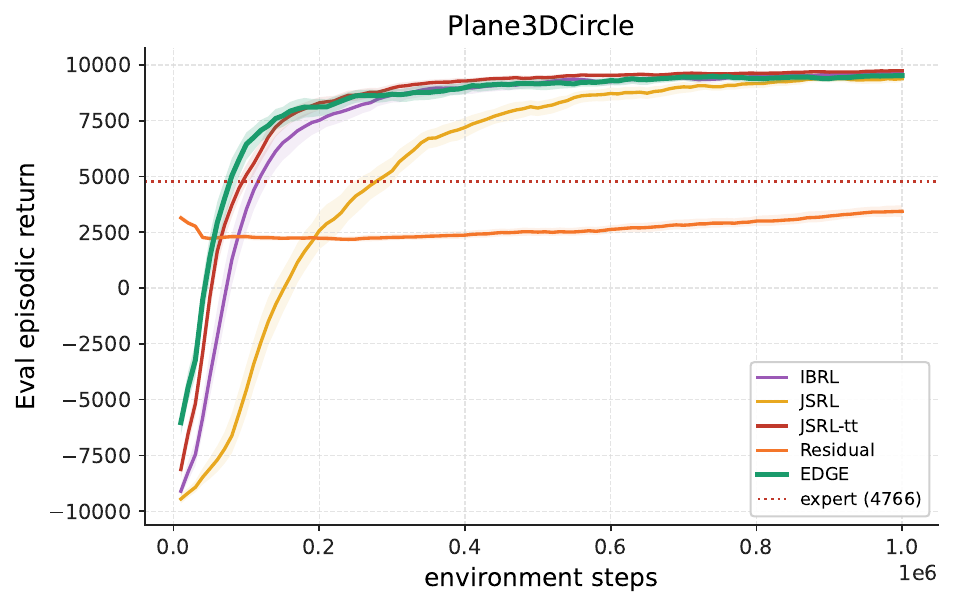}\hfill
\includegraphics[width=0.48\linewidth]{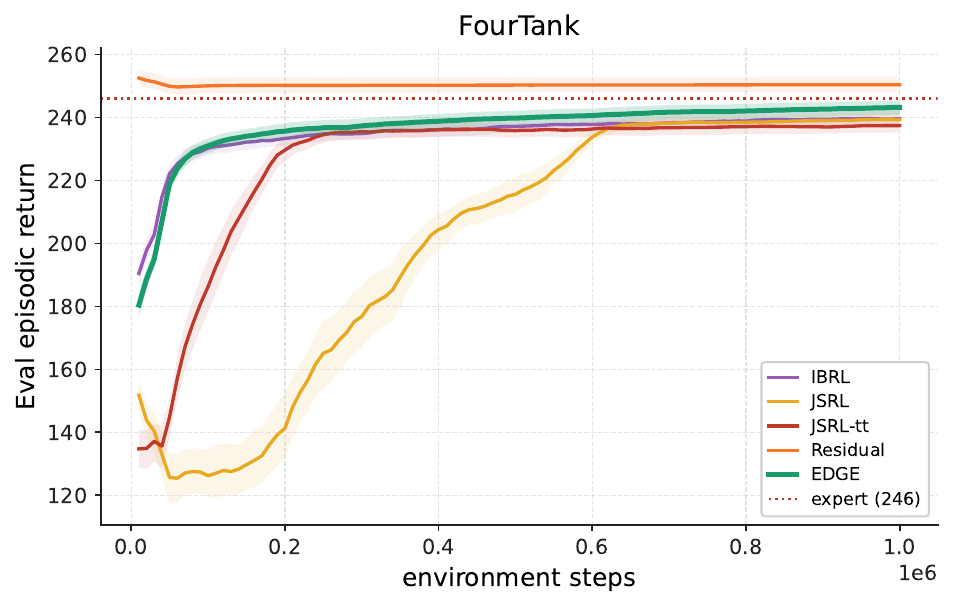}\\
\includegraphics[width=0.48\linewidth]{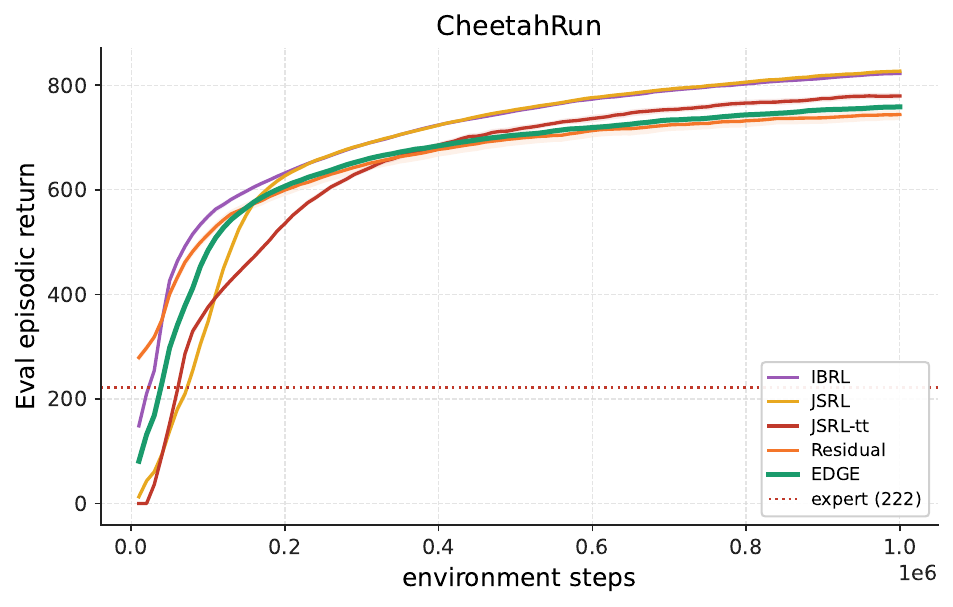}
\caption{Per-environment learning curves. IQM across 100/50 seeds
(control/locomotion) with bootstrap 95\% CI band. Horizontal dashed
line marks $J_{\mathrm{exp}}$.}
\label{fig:curves_appendix}
\end{figure}

\begin{figure}[h]
\centering
\includegraphics[width=0.48\linewidth]{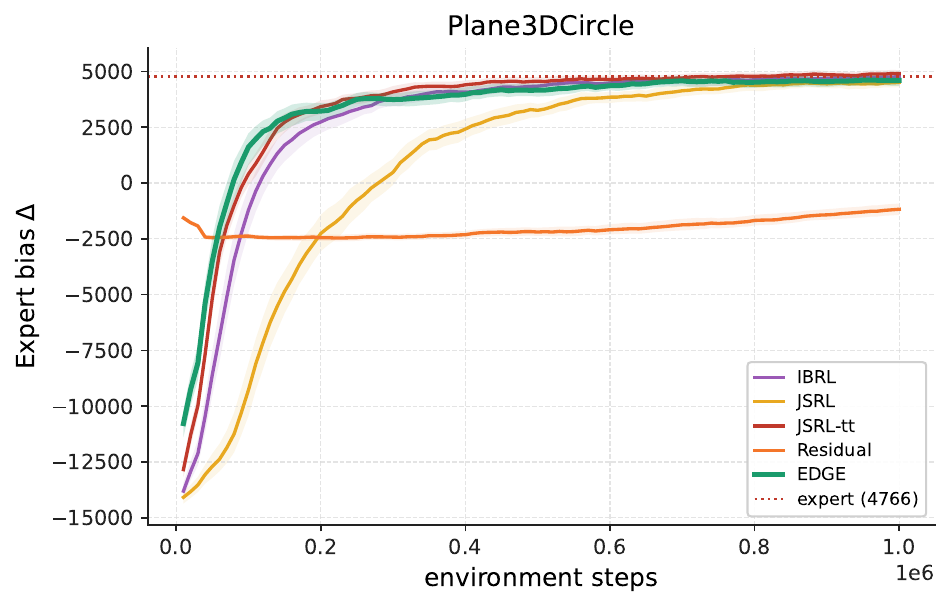}\hfill
\includegraphics[width=0.48\linewidth]{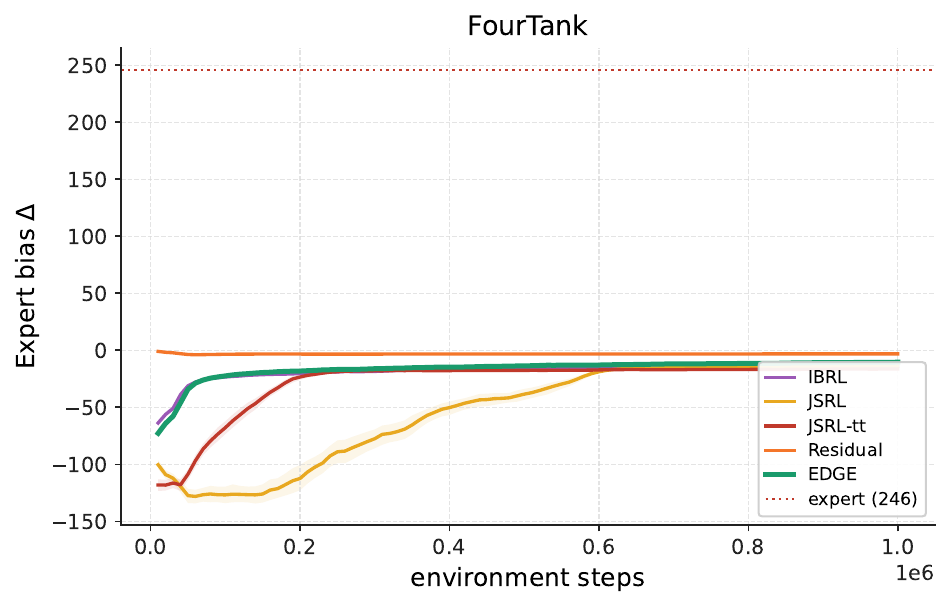}\\
\includegraphics[width=0.48\linewidth]{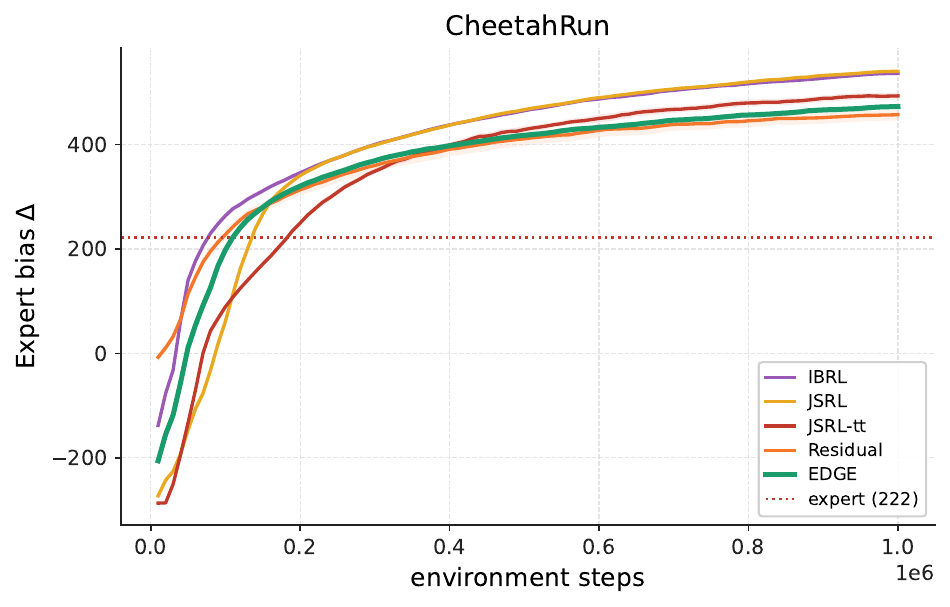}
\caption{Per-environment expert bias $\Delta = J - J_{\mathrm{exp}}$. The
horizontal axis at zero marks expert parity.}
\label{fig:bias_appendix}
\end{figure}

\begin{figure}[h]
\centering
\IfFileExists{figs/sample_eff_steps_per_env.pdf}{
\includegraphics[width=\linewidth]{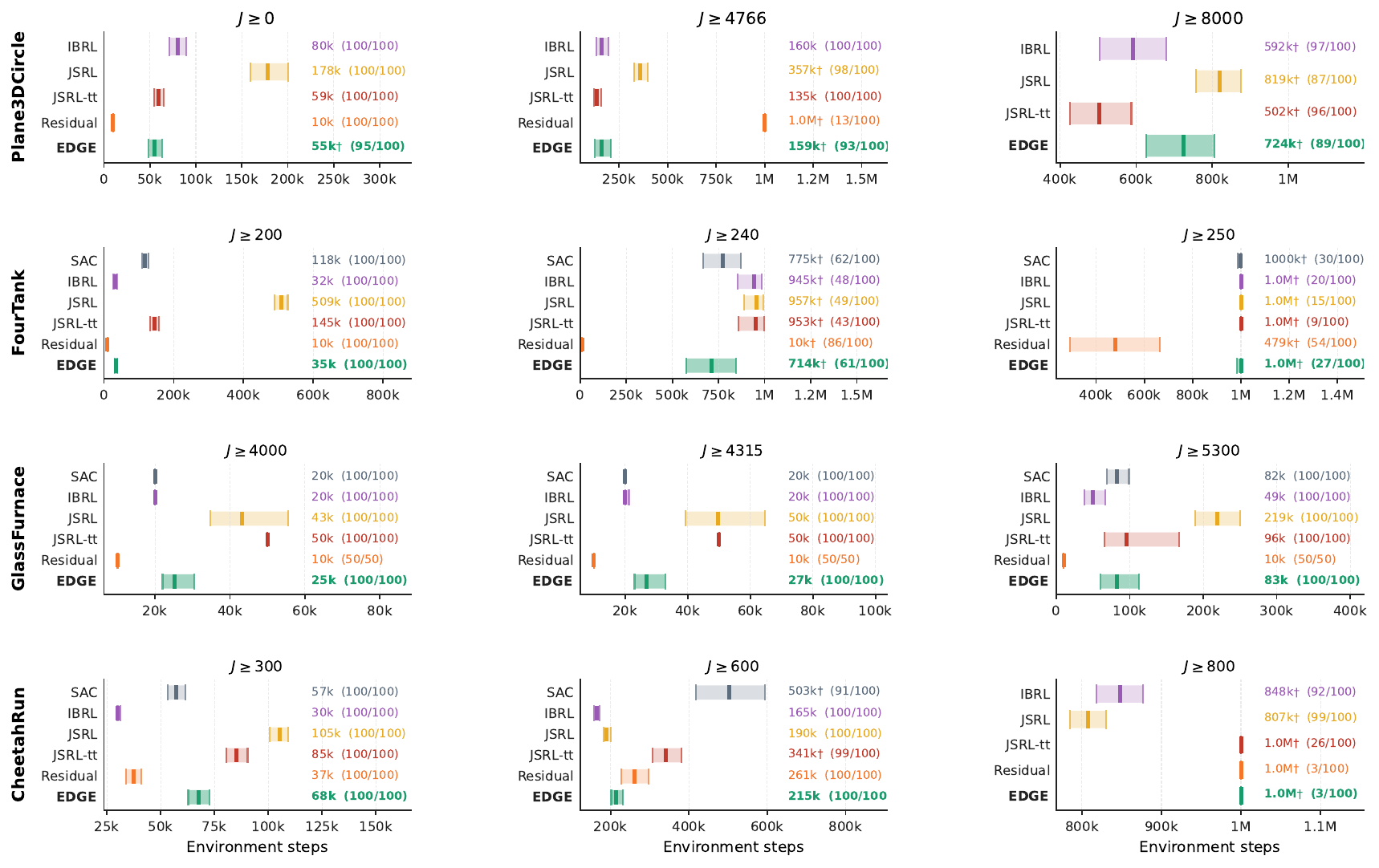}}{}
\caption{Sample efficiency: per-method IQM (tick) and 95\%
bootstrap CI (rectangle) of the first-permanent-crossing step at
three reward thresholds per env. Annotation: \texttt{<IQM>
(<n\_crossed>/<n\_total>)}; $\dagger$ = budget-censored.
\texttt{10k} means the agent already exceeds the threshold at
the first evaluation.}
\label{fig:sample_eff_appendix}
\end{figure}

\begin{figure}[h]
\centering
\IfFileExists{figs/degradation_v2_bars_by_method_grid_with_labels.pdf}{
  \includegraphics[width=\linewidth]{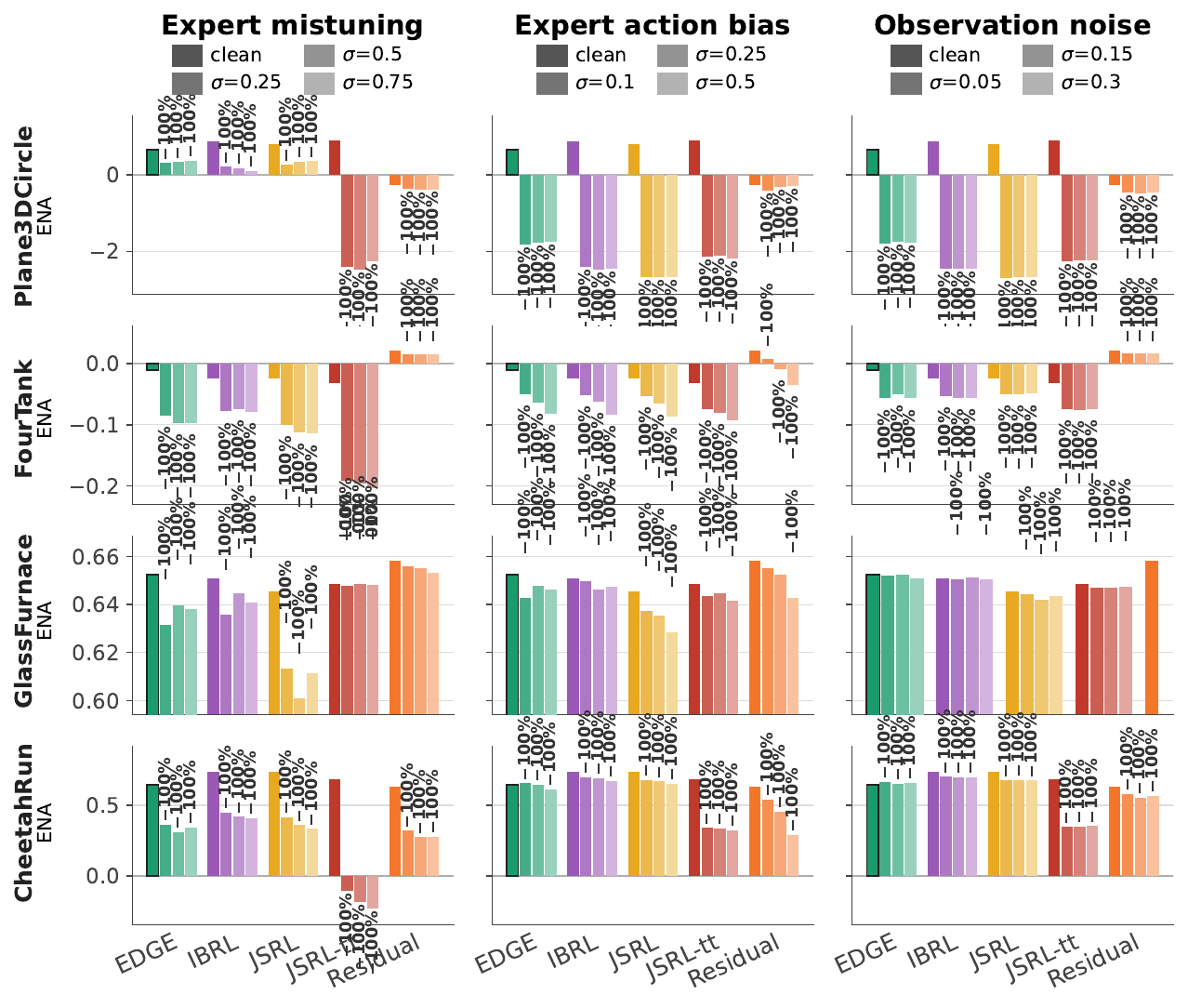}}{}
\caption{Per-environment perturbation sweep, ENA scale with per-bar
percent annotations (annotated companion to
\cref{fig:degradation_body}).}
\label{fig:degradation_appendix}
\end{figure}

\begin{figure}[h]
\centering
\IfFileExists{figs/degradation_v2_bars_by_method_grid_raw.pdf}{
  \includegraphics[width=\linewidth]{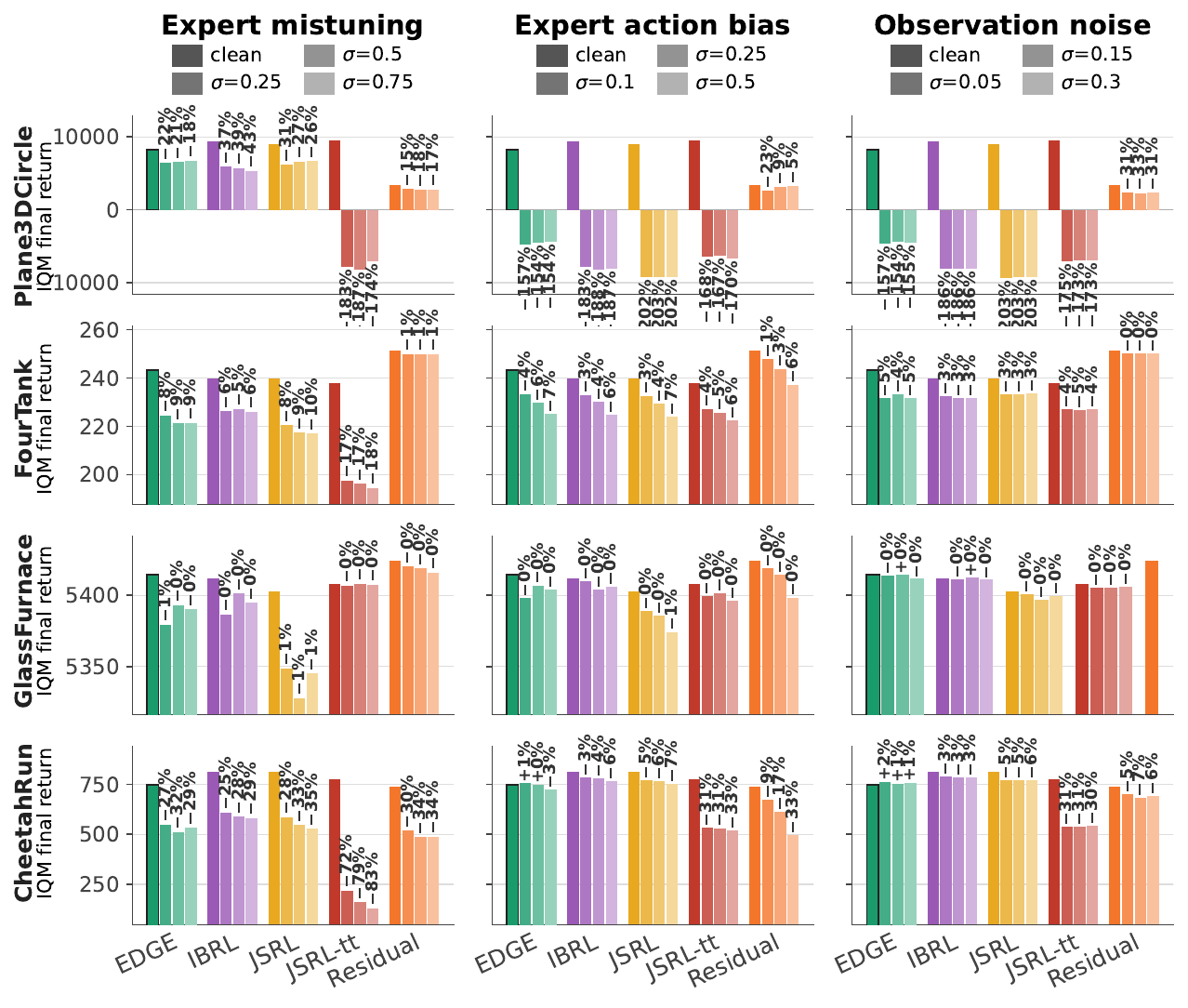}}{}
\caption{Per-environment perturbation sweep, raw IQM final return
(un-normalized companion to \cref{fig:degradation_body}, for direct
inspection in native task units).}
\label{fig:degradation_appendix_raw}
\end{figure}

\section{Ablation Study}
\label{app:ablation}

EDGE composes two design choices the abstract foregrounds (gating
form and pessimism) plus four secondary choices (preprocessing,
critic bootstrap, expert quality sanity, off-policy correctness).
Each variant below flips a single knob of the tuned EDGE config
from the \texttt{sac\_quality\_aware} HPO study and retrains at
$300\mathrm{k}$ steps with $50$ seeds (a shorter horizon than the
$1\mathrm{M}$-step main study, chosen to fit the per-variant compute
budget; relative cost percentages are computed against an
EDGE proxy at the same horizon). Variants run on Plane3DCircle,
CheetahRun, and FourTank: Plane3DCircle has the largest
between-method spread of any benchmark task and gives clean
signal-to-noise for design-choice ablations; CheetahRun stresses
the same knobs on a non-terminating task with a weak expert.
FourTank's per-seed standard deviation ($\sim 10$ on a $\sim 240$
scale) makes sub-method-swap effects borderline at $50$ seeds, but
we run the same $13$ variants there too because FourTank is the
F1-anchor task and reviewer-flagged: see the FT panel
(\cref{fig:ablation_ft}) below.

\paragraph{Variant $\to$ axis map.}\leavevmode\par\nobreak\noindent
\begingroup
\footnotesize
\setlength{\tabcolsep}{4pt}
\renewcommand{\arraystretch}{1.15}
\begin{tabularx}{\linewidth}{@{}l >{\raggedright\arraybackslash}X >{\raggedright\arraybackslash}X@{}}
\toprule
\textbf{Variant} & \textbf{Knob flipped} & \textbf{Axis tested} \\
\midrule
\texttt{gating\_argmax}        & softmax over LCB replaced by argmax over mean-$Q$ & gate form and pessimism, jointly \\
\texttt{bootstrap\_argmax}     & add IBRL-style $\max(Q_\pi, Q_*)$ TD target       & critic-bootstrap composition with IBRL \\
\texttt{bootstrap\_lcb\_gated} & apply LCB gate at the TD bootstrap too            & LCB consistency at value vs.\ action \\
\texttt{no\_state\_aug}        & drop obs $\oplus$ expert-state augmentation       & preprocessing (non-Markov fix) \\
\texttt{no\_obs\_norm}         & drop running obs normalization                    & preprocessing (scale invariance) \\
\texttt{random\_expert}        & PID/CPG expert replaced by stateful random policy & expert-signal sanity \\
\texttt{store\_policy\_action} & buffer stores policy action, not executed         & off-policy correctness \\
\texttt{expert\_prefill}       & $100\mathrm{k}$ expert prefill on top of EDGE     & orthogonal additivity with the F3 fix \\
\texttt{literal\_thompson}     & softmax-LCB gate replaced by literal Gaussian Thompson sampling over both expert and policy streams & gate parameterization (TS vs softmax) \\
\bottomrule
\end{tabularx}
\endgroup

\paragraph{Pessimism and gate-sharpness axes (HPO sweep).}
The two EDGE-specific knobs are tuned per task by HPO over
$\kappa \in [0.1, 5.0]$ and $\tau \in [0.1, 10.0]$ (log-scale). The
tuned values vary by an order of magnitude across tasks
(\cref{tab:beta_per_task}): a $30\times$ spread on $\kappa$
(0.12 to 4.05) and $50\times$ on $\tau$ (0.17 to 8.51) shows both
axes are exploitable. Plane3DCircle (catastrophic failure, weak
expert) lands at strong pessimism + wide gate ($\kappa\!=\!4.05,
\tau\!=\!8.51$); FourTank (no termination, near-ceiling expert) hits
the lower $\kappa$ bound with moderate $\tau$.

\begin{table}[h]
\centering
\small
\begin{tabular}{lcccc}
\toprule
\textbf{Task} & \textbf{Tuned $\kappa$} & \textbf{Tuned $\tau$} & \textbf{Termination?} & \textbf{Expert quality} \\
\midrule
Plane3DCircle    & 4.05 & 8.51 & yes (crash) & weak (50\% of best RL) \\
GlassFurnace     & 2.61 & 0.24 & no          & moderate (80\% of best RL) \\
CheetahRun       & 1.16 & 0.17 & no          & weak (36\% of best RL) \\
FourTank         & 0.12 (lower bound) & 1.64 & no & near-ceiling (98\% of best RL) \\
\bottomrule
\end{tabular}
\caption{HPO-tuned $(\kappa, \tau)$ for EDGE across four envs.}
\label{tab:beta_per_task}
\end{table}

\paragraph{Gate form versus scoring rule: four-corner status.}
The abstract's two-axis claim (gate form and scoring rule) expands
into four corners. The variant names below label each corner by
what the implementation actually computes (argmax/softmax over
min-of-ensemble $Q$ vs.\ LCB score $\check Q = \mu - \kappa\sigma$);
the populated four-corner IQM table is reported alongside the
literal-Thompson discussion below.
\begin{center}
\small
\begin{tabular}{lll}
\toprule
& \textbf{argmax gate} & \textbf{softmax gate} \\
\midrule
\textbf{min-$Q$ scoring}   & \texttt{gating\_argmax}        & \texttt{no\_pessimism} ($\kappa{=}0$) \\
\textbf{LCB scoring}       & \texttt{argmax\_lcb}           & EDGE default (all envs)               \\
\bottomrule
\end{tabular}
\end{center}
All four corners are populated by ablations on Plane3DCircle.

\paragraph{Literal Thompson sampling vs softmax-LCB gate.}
The \texttt{literal\_thompson} variant replaces EDGE's
fixed-bandwidth softmax over LCB scores with literal Gaussian-
parametric Thompson sampling on the same critic ensemble: at each
state the gate draws $\tilde q_e \sim \mathcal{N}(\mu_e, \sigma_e^2)$
and $\tilde q_p \sim \mathcal{N}(\mu_p, \sigma_p^2)$ from the empirical
ensemble mean/std and selects the larger draw. On Plane3DCircle this
underperforms EDGE by $\approx 1{,}965$ IQM ($4{,}265$ vs $6{,}230$);
on CheetahRun it ties EDGE within seed noise ($578$ vs $\approx 590$,
on a task whose ablation spread is small).

A natural reading of this gap is the under-dispersion of small
critic ensembles
\citep{osband2016bootstrappeddqn,an2021edac,ghasemipour2022msg}: with
$N\!\in\!\{2,4\}$, the empirical ensemble std is a biased low
estimator of the posterior std, so the Thompson draws collapse
toward $\mathrm{argmax}_a \mu_a$ and the policy arm gets starved.
The \texttt{literal\_thompson\_k10} variant directly tests this by
re-running literal Thompson sampling with a $N{=}10$ ensemble
(roughly $2.5\times$ the per-step critic cost of $N{=}4$). On
Plane3DCircle the gap to EDGE is essentially unchanged: $4{,}190$ at
$N{=}10$ vs $4{,}265$ at $N{\in}\{2,4\}$, both well below EDGE's
$6{,}230$. We therefore reject under-dispersion as the operative
mechanism for the EDGE-vs-literal-TS gap.

The cleaner reading, consistent with the four-corner table below,
is that the load-bearing piece is the \emph{pessimism shift}
that LCB scoring applies to both arms ($\check Q = \mu - \kappa
\sigma$, with $\kappa$ tuned by HPO), not the gate parameterization.
Literal Thompson sampling is symmetric noise around $\mu$ on each
arm; swapping it for the LCB shift biases the gate against
high-uncertainty actions on both arms, which is what removes the F1
critic-blind-spot pathology.

\paragraph{Gate form and scoring rule, four corners populated.}
With \texttt{argmax\_lcb} closed, all four corners are populated on
Plane3DCircle ($300\mathrm{k}$ steps, $50$ seeds):
\begin{center}
\small
\begin{tabular}{lcc}
\toprule
& \textbf{argmax gate} & \textbf{softmax gate} \\
\midrule
\textbf{min-$Q$ scoring}   & $2{,}293$ & $6{,}109$ \\
\textbf{LCB scoring}       & $5{,}468$ & $\mathbf{6{,}230}$ \\
\bottomrule
\end{tabular}
\end{center}
Reading column-wise (gate-form effect at fixed scoring): softmax
beats argmax by $+3{,}816$ at min-$Q$ and only $+762$ at LCB.
Reading row-wise (scoring effect at fixed gate-form): LCB beats
min-$Q$ by $+3{,}175$ at argmax and $+121$ at softmax. The
$\{+121, +762\}$ off-diagonal increments are within plausible seed
noise at $50$ seeds, so the data more cleanly support \emph{either
lever alone (softmax with min-$Q$, or argmax with LCB) recovers
nearly all of EDGE's score} than a strong "they are not redundant"
claim; EDGE is nominally best at $6{,}230$ but the LCB-vs-min-$Q$
gap at the softmax corner is too small to read as a real
incremental contribution from softmax on top of LCB. The
load-bearing finding is the diagonal: the $(argmax, min-Q)$ corner
that IBRL occupies ($2{,}293$) is by far the worst, and either of
EDGE's two design choices alone is sufficient to escape it. The
literal-TS head-to-head above further isolates LCB's
pessimism-shift as the bigger of the two levers.

\paragraph{Reproducibility.} Per-variant per-env results are
released as
\texttt{ablation\_results/<env>\_\_<variant>.json} in the harness
mirror; the figure and the four-corner table above are regenerated
from those JSONs by \texttt{plot\_sweep.py --paper}.

\begin{figure}[h]
\centering
\IfFileExists{figs/ablation_plane3dcircle.pdf}{
\includegraphics[width=0.85\linewidth]{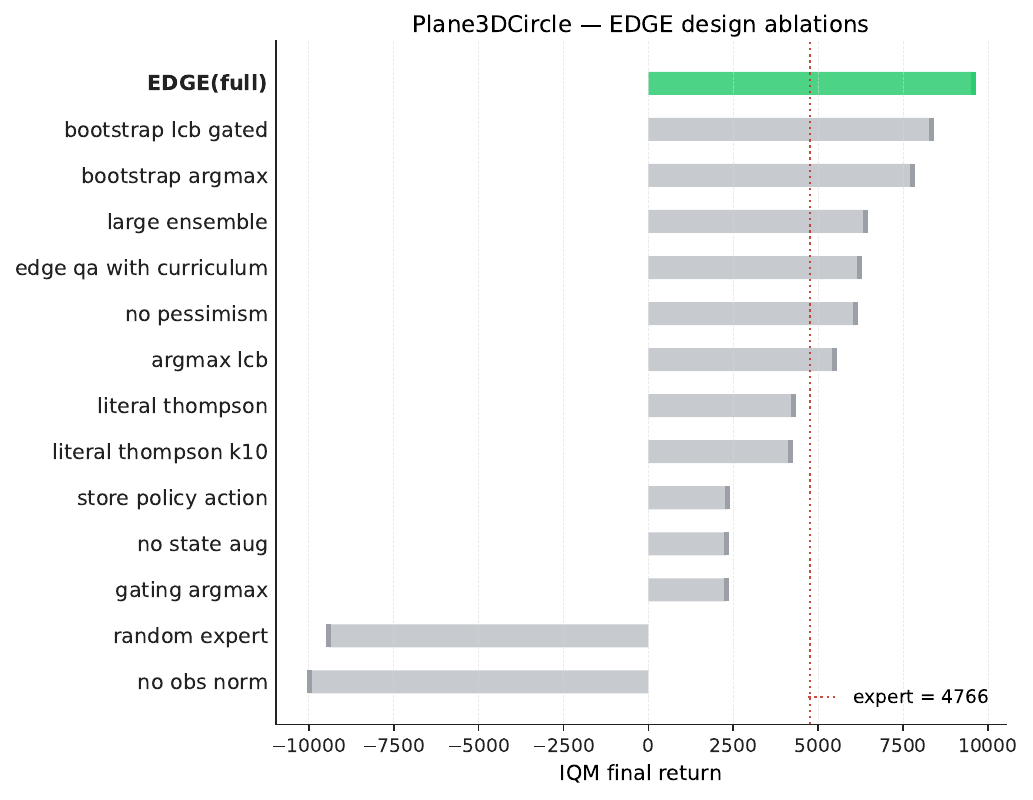}}{}
\caption{Plane3DCircle ablation bars. Reference (green) is EDGE
at the tuned configuration; each grey bar flips a single design
knob. CheetahRun companion in the appendix
(\cref{fig:ablation_cr}); FourTank is not used as an ablation env
(see appendix intro).}
\label{fig:ablation_p3dc}
\end{figure}

\begin{figure}[h]
\centering
\IfFileExists{figs/ablation_cheetahrun.pdf}{
\includegraphics[width=0.85\linewidth]{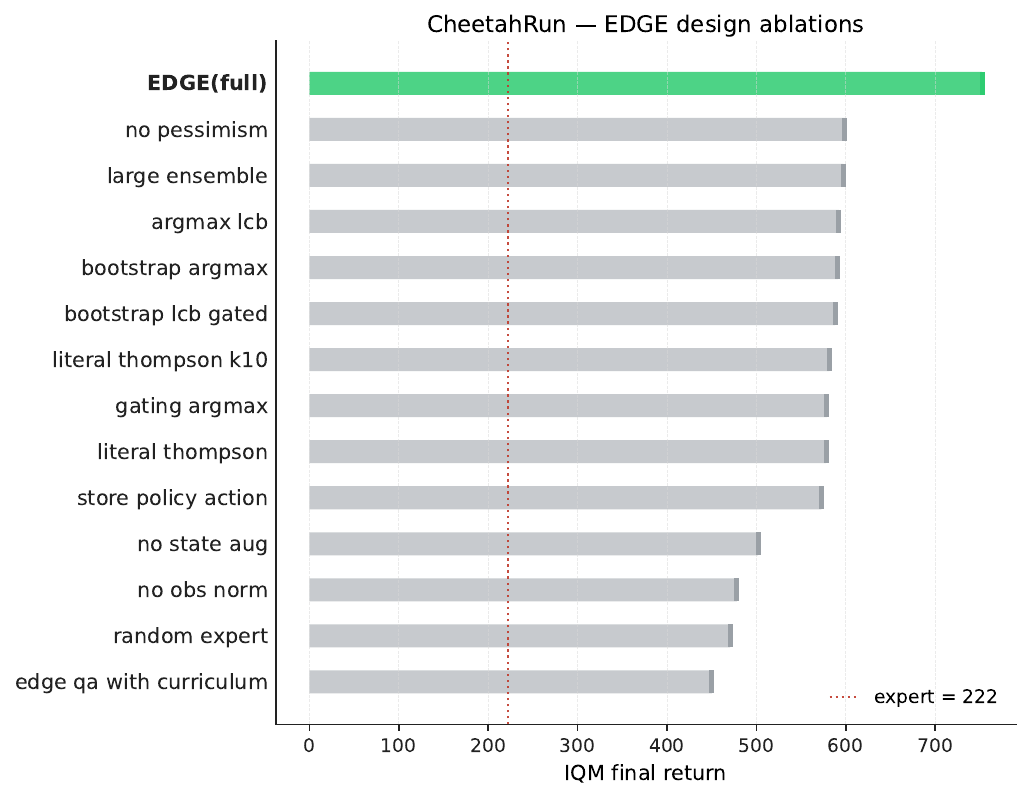}}{}
\caption{CheetahRun ablation bars (companion to
\cref{fig:ablation_p3dc}). Variant spread is small: every variant
sits within $\sim$$25\%$ of EDGE, consistent with CheetahRun's
documented low between-method spread on the main results.}
\label{fig:ablation_cr}
\end{figure}

\begin{figure}[h]
\centering
\IfFileExists{figs/ablation_fourtank.pdf}{
\includegraphics[width=0.85\linewidth]{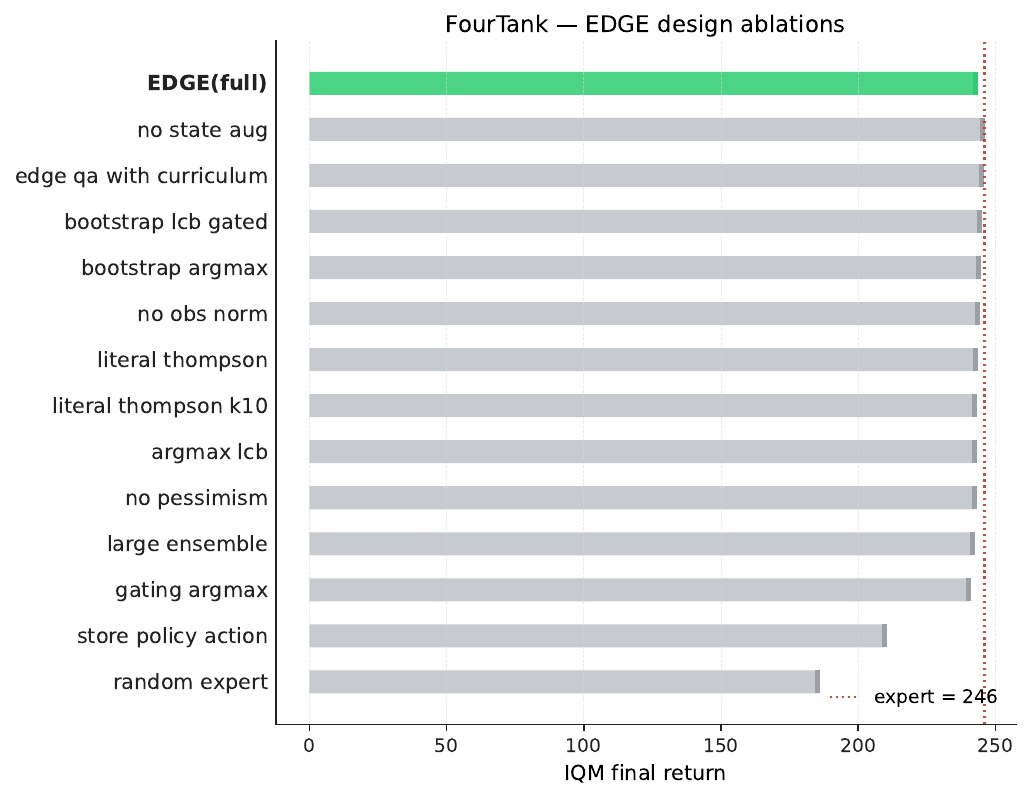}}{}
\caption{FourTank ablation bars (companion to
\cref{fig:ablation_p3dc}). Variant spread is tight ($\sim$$240$ to
$245$ IQM, on a $\sim 250$-scale task) for every variant except
the two sanity-check breakers \texttt{random\_expert} ($185$) and
\texttt{store\_policy\_action} ($210$). Within the tight cluster
all four 2$\times$2 corners (gate form $\times$ scoring rule)
sit within seed noise of EDGE: at the F1-anchor task the
distinguishing axes of EDGE are not individually load-bearing on
the per-seed scale we measure.}
\label{fig:ablation_ft}
\end{figure}

\section{Benchmark Tasks}
\label{app:tasks}

This appendix expands the per-task descriptions summarized in
\cref{sec:baselines}. \cref{tab:expert_headroom} reports
measured expert performance and the per-task reference scale
$J_{\mathrm{ref}}$ used to compute the expert-normalized advantage
$\nbias$ (\cref{eq:nbias}). $J_{\mathrm{ref}}$ is the per-task
episodic-return ceiling (per-step reward upper bound multiplied by
episode length). The ceilings are physical upper bounds, so
$J_{\mathrm{ref}}$ does not depend on any method evaluated in the
main results.

\begin{table}[h]
\centering
\small
\renewcommand{\arraystretch}{1.1}
\begin{tabular}{lrrrrr}
\toprule
\textbf{Task} & \textbf{Episode} & \textbf{Act.\ dim} &
\textbf{Expert} & \textbf{$J_{\mathrm{exp}}/J_{\max}$} & \textbf{cv} \\
\midrule
Plane3DCircle   & 10\,000 & 3  & 4{,}766 & 0.48 & 0.89 \\
FourTank        & 500     & 2  & 246     & 0.49 & 0.18 \\
GlassFurnace    & 5\,760  & 4  & 4{,}315 & 0.75 & 0.43 \\
CheetahRun      & 1\,000  & 6  & 291     & 0.29 & 0.07 \\
\bottomrule
\end{tabular}
\caption{Tuned-expert performance per task (16--20 seed mean).
$J_{\mathrm{exp}}/J_{\max}$ uses the theoretical episodic ceiling
(per-step reward upper bound times episode length: $10^4$ on
Plane3DCircle, $500$ on FourTank, $5{,}760$ on GlassFurnace, $1{,}000$
on CheetahRun). \emph{cv}: coefficient of variation across seeds.}
\label{tab:expert_headroom}
\end{table}

\textbf{Plane3DCircle.} A 3D fixed-wing aircraft (from the anonymized
task suite, \cref{app:reproducibility})
commanded to track a circular trajectory of randomly drawn radius at a target
altitude. Episodes are 10{,}000 steps; the action space is $[-1,1]^3$
(throttle, stick, aileron). The reward multiplies altitude-tracking and
radial-tracking factors, both raised to the 10th power for sharpened
shaping. A fixed-gain heading PID is structurally inadequate because
the reference direction rotates continuously, so the controller leaves
substantial headroom.

\textbf{FourTank.} The quadruple-tank process of
\citet{johansson2000quadruple}, a canonical MIMO benchmark with
right-half-plane zeros, taken from the pc-gym RL benchmark suite
\citep{bloor2024pcgym} and adapted into the anonymized task suite.
Two pumps feed four tanks through crossover valves; the lower two
tank levels must be tracked via the pumps. Episodes are 500 steps.
A per-loop PID is fundamentally MIMO-incorrect (the decoupling
matrix depends on operating point).

\textbf{GlassFurnace.} A custom thermal industrial process from the
anonymized task suite, with multi-zone heating dynamics and slow
thermal coupling. The expert is a mid-quality PID; long horizon
makes the asymptote-vs-degradation tradeoff easier to read than on
shorter setpoint tasks.

\textbf{CheetahRun.} The dm\_control \citep{tassa2018dmcontrol} cheetah
as wrapped by Mujoco Playground \citep{mujocoplayground2024}. Six actuators, 1{,}000-step episodes, reward
is forward velocity plus survival; the task is non-setpoint. The expert is
a Central Pattern Generator (CPG), an open-loop sinusoidal pattern
generator with 13 scalar parameters (one frequency, six amplitudes, six
phases).

\subsection{Visual Renderings}

We render the three non-standard tasks
(\cref{fig:env_renders}). CheetahRun follows the standard
dm\_control cheetah and is not re-rendered.

\begin{figure}[h]
\centering
\begin{minipage}[t]{0.99\linewidth}
\centering
\includegraphics[width=\linewidth]{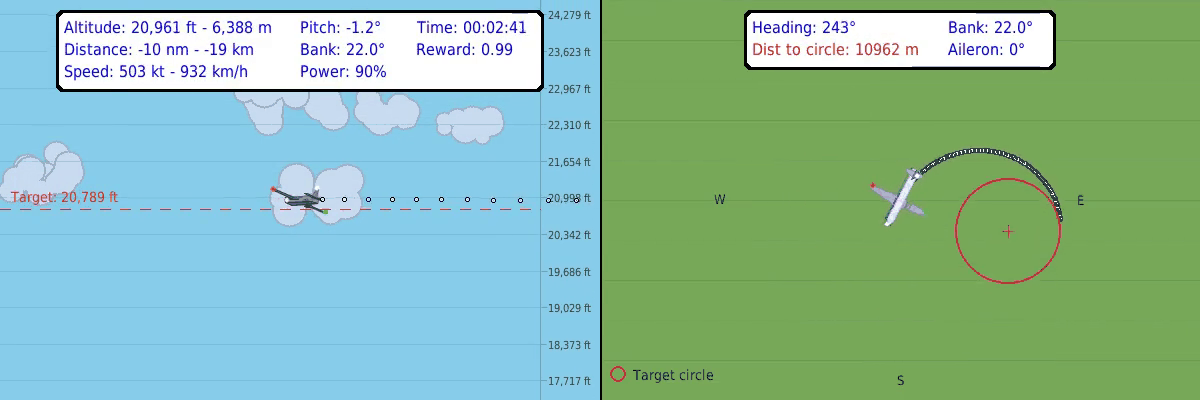}\\[2pt]
\textbf{(a)} Plane3DCircle: aircraft side and top views.
\end{minipage}\\[10pt]
\begin{minipage}[t]{0.49\linewidth}
\centering
\includegraphics[width=\linewidth]{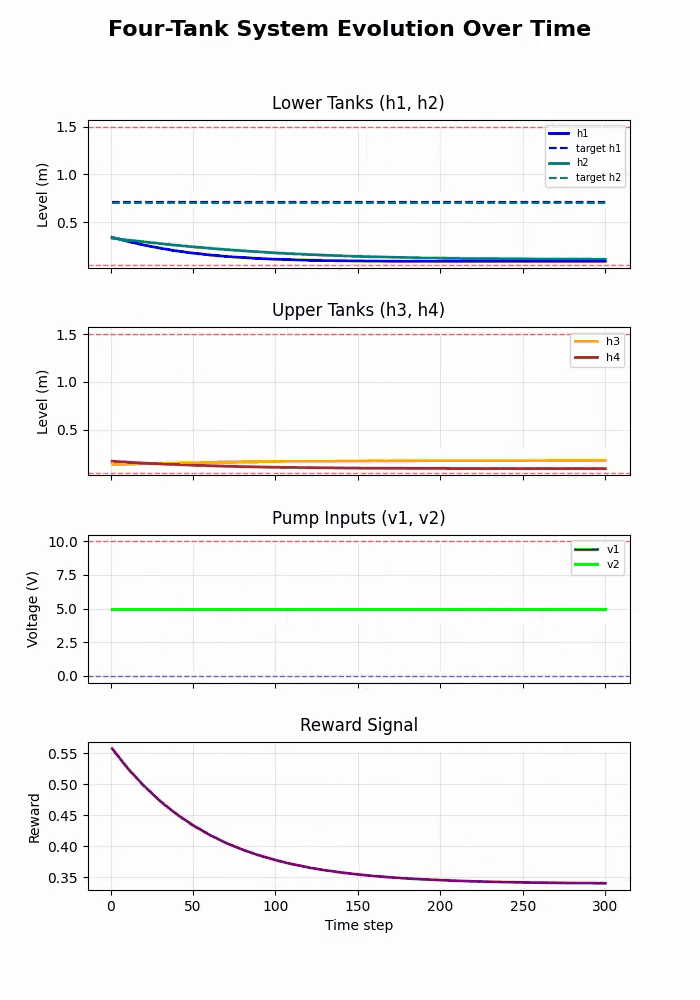}\\[2pt]
\textbf{(b)} FourTank: state evolution under PID.
\end{minipage}\hfill
\begin{minipage}[t]{0.49\linewidth}
\centering
\includegraphics[width=\linewidth]{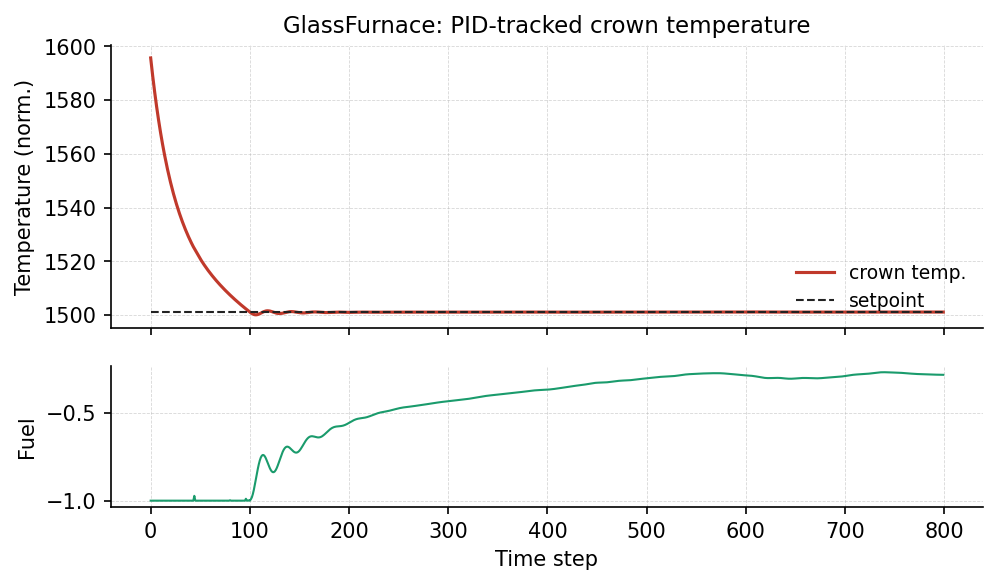}\\[6pt]
\textbf{(c)} GlassFurnace: long thermal time constant makes
setpoint tracking hard despite the simple control loop.
\end{minipage}
\caption{Per-environment renderings for the three non-standard
benchmark tasks.}
\label{fig:env_renders}
\end{figure}

\section{Significance Testing for \cref{tab:main_perenv}}
\label{app:significance}

\paragraph{Per-seed scalar.} For each (env, method) we read the per-seed
final-window mean of \texttt{Eval/episodic\_mean\_reward} over the last
$200{,}000$ env steps from the corresponding TensorBoard event files.
The result is a vector of length $n_{\mathrm{seeds}}$ ($100$ for control
tasks, $50$ for CheetahRun), one scalar per seed.

\paragraph{IQM and dispersion.} The reported center is the
inter-quartile mean: sort the per-seed scalars and average the middle
$50\%$. The brackets are a 95\% bootstrap confidence interval of the
IQM ($5{,}000$ resamples, RNG seed $42$).

\paragraph{$\Delta\%$ vs SAC.} Computed from the IQM values:
$\Delta = (J_{\mathrm{method}} - J_{\mathrm{SAC}})/|J_{\mathrm{SAC}}|
\times 100\%$. Sign convention: positive means the method beats SAC.

\paragraph{Hypothesis test.} For each (env, method $\neq$ SAC) we run a
two-sided Mann-Whitney $U$ test on the per-seed scalars
(\texttt{scipy.stats.mannwhitneyu} \citep{virtanen2020scipy},
``two-sided''). Mann-Whitney is
chosen over a $t$-test because per-seed final-window scores are not
guaranteed normal: P3DCircle and CheetahRun have heavy left tails
(seeds that crash early), and SAC's distribution on P3DCircle is
multi-modal. The rank-based test is order-only and survives both
issues.

\paragraph{Multiple-comparison correction.} Within each env, the five
expert-using methods (IBRL, JSRL-curriculum, JSRL-training-time, Residual, EDGE) form a
test family of size $5$. We apply Holm-Bonferroni step-down correction
to the raw $p$-values within each env. Significance markers reflect
corrected $p$-values: $^{*}{:}p_{\mathrm{corr}}{<}0.05$,
$^{**}{:}p_{\mathrm{corr}}{<}0.01$,
$^{***}{:}p_{\mathrm{corr}}{<}0.001$.

\paragraph{Reproducibility.} The script that regenerates \cref{tab:main_perenv}
is \texttt{make\_main\_table.py} at the repository root. It reads
TensorBoard events, computes the per-cell statistics, and writes
\texttt{paper/tables/main\_perenv.tex}, which the manuscript
\texttt{\textbackslash input}'s. Re-run after new HPO results land
or seed counts change.

\section{Directional Tests for F1, F2, F3}
\label{app:directional_tests}

We confirm each failure mode F1/F2/F3 with one directional test
whose sign is fixed by the F-mechanism's prediction. All three are
one-sided permutation tests on per-seed final-window IQMs ($n{=}50$
per arm, $10^{5}$ shuffles, additive $+1$ smoothing on the count).
Per-seed values are read from the phase3 TB events for F1 and F2,
and from the per-seed dump of \texttt{run\_degradation\_eval.py}
for F3. The regenerating script is
\texttt{run\_directional\_tests.py} at the repository root.

\begin{table}[h]
\centering
\footnotesize
\setlength{\tabcolsep}{4pt}
\renewcommand{\arraystretch}{1.15}
\begin{tabular}{@{}llllrrrl@{}}
\toprule
\textbf{Mode} & \textbf{Setting} & \textbf{H1} & \textbf{IQM(a)} & \textbf{IQM(b)} & \textbf{$\Delta$} & \textbf{$p$} \\
\midrule
F1 & FT clean        & IBRL $<$ SAC        & $239.7$   & $244.0$   & $-4.3$     & $0.031$    \\
F1 & GF clean        & IBRL $<$ SAC        & $5{,}413$ & $5{,}405$ & $+7.9$     & $0.995$    \\
F2 & P3DC clean      & Residual $<$ JSRL-tt & $3{,}237$ & $9{,}638$ & $-6{,}401$ & $<10^{-4}$ \\
F2 & P3DC clean      & Residual $<$ IBRL    & $3{,}237$ & $9{,}499$ & $-6{,}262$ & $<10^{-4}$ \\
F2 & P3DC clean      & Residual $<$ EDGE    & $3{,}237$ & $9{,}318$ & $-6{,}081$ & $<10^{-4}$ \\
\midrule
F3 & P3DC $\sigma{=}0.5$ & JSRL-tt $<$ EDGE & $-6{,}590$ & $-4{,}745$ & $-1{,}845$ & $<10^{-4}$ \\
\bottomrule
\end{tabular}
\caption{One-sided permutation tests on per-seed final-window IQM
($n{=}50$ per arm, $10^{5}$ shuffles, additive smoothing). F1
binds on FourTank ($q^{\mathrm{exp}}\!=\!98\%$) but not on
GlassFurnace ($q^{\mathrm{exp}}\!=\!80\%$): the binding regime is
tighter than ``near-ceiling'' in the loose sense and we narrow the
held-out condition accordingly.}
\label{tab:directional_tests}
\end{table}

\section{Additional Scope and Compute Notes}
\label{app:limitations_extra}

\textbf{Scope.} Evaluation is in simulation with queryable
deterministic experts on continuous-control tasks; we make no
sim-to-real claims. Extension to discrete actions or stochastic
experts (replace $\delta_{\pi^{\mathrm{exp}}(s)}$ in \eqref{eq:behavior} with the
expert action distribution) is straightforward in principle but
untested. Three of five tasks are PID-regulated setpoint-tracking
problems; while they span distinct failure modes of PID control (MIMO
coupling, long-horizon stiffness, 3D kinematic tracking),
high-dimensional manipulation and image-based observations are not
covered by this benchmark.

\textbf{Per-step compute overhead.} Every environment step requires
one expert query, one policy forward pass, and $N$ critic forward
passes at \emph{both} the expert and policy actions to compute
$\check{Q}_t$. This roughly doubles the wall-clock cost of action
selection compared to plain SAC; on environments where simulation
itself is the bottleneck the overhead is amortized, but on
GPU-accelerated batched simulators it is not. Wall-clock costs per
phase, broken down by env and method, are recorded in
\texttt{run\_full\_study.py --status} and reproduced in
\cref{app:hpo}.

\textbf{Total project compute.} All training runs use a single GPU
per run; intra-trial parallelism comes from JAX-vmapped seed batches
(50--100 seeds per training process). We report wall-clock totals
read from per-trial logs (\texttt{elapsed\_s} fields in the result
JSONs).
\begin{itemize}[leftmargin=1.2em,itemsep=1pt,topsep=1pt]
  \item HPO phase 1 (TPE search, 30 trials $\times$ 20 seeds $\times$
        $1\mathrm{M}$ steps, 5 methods $\times$ 4 envs):
        $\sim$$295$ GPU-hours.
  \item HPO phase 2 (top-5 confirmation, 50 seeds $\times$ $1\mathrm{M}$
        steps): $\sim$$258$ GPU-hours.
  \item HPO phase 3 (top-1 confirmation, 50--100 seeds $\times$
        $1\mathrm{M}$ steps): $\sim$$51$ GPU-hours.
  \item Degradation sweeps (v1 expert undertuning $+$ v2 action bias and
        observation noise; 167 cells $\times$ 50 seeds $\times$ $300\mathrm{k}$
        steps): $\sim$$120$ GPU-hours (estimated; per-cell logs do not
        record \texttt{elapsed\_s} for these runs).
  \item Ablation study ($13$ variants $\times$ 2 envs $\times$ $50$
        seeds $\times$ $300\mathrm{k}$ steps): $\sim$$25$ GPU-hours.
\end{itemize}
Total reported compute is approximately $730$ GPU-hours. The full
project budget including failed/preliminary runs not in the paper is
roughly $1.5$--$2\times$ this. Hardware mix: an internal workstation with $4$
$\times$ data-center-class NVIDIA GPUs ($46$\,GB each) for the bulk
of the HPO and ablation work, supplemented by short-lived cloud
instances with $5$ $\times$ NVIDIA RTX 5090 ($32$\,GB) for parallel
degradation sweeps.

\section{Hyperparameter Optimization Protocol}
\label{app:hpo}

All baselines are tuned by a common protocol to ensure a fair
comparison. We use Optuna's TPE sampler with the
\texttt{constant\_liar}/\texttt{multivariate} options enabled and an
asynchronous median pruner. The per-method search budget is 30
trials; each trial trains 20 seeds for $10^6$ environment steps and
reports the interquartile mean of final-window returns. Trials are
pruned once their trailing mean falls below the median of completed
trials at the same training step, with a warmup of 300k steps. The
search space per method mirrors SAC's (learning rates, discount,
target smoothing, entropy target, batch/arch size); EDGE adds two
method-specific dimensions (pessimism coefficient $\kappa$, gate
temperature $\tau$).

\paragraph{Per-task seed counts.} Final evaluation uses 100
freshly-seeded training runs per configuration at the best-found
hyperparameters on the three control tasks (Plane3DCircle,
FourTank, GlassFurnace). On CheetahRun, GPU-RAM constraints bound
the parallel-seed count to 50. The 20-seed-per-trial HPO budget is
identical across all tasks.

\section{Expert Tuning Protocols}
\label{app:tuning}

The main body treats each task's expert as a black-box oracle. This appendix
describes how each expert is actually tuned, why we chose these protocols,
and what tuning would have been unfair.

\paragraph{Scope of the expert claims.}
We are not specialists in PID auto-tuning, gain-scheduling, or central
pattern generator design, and the controllers we ship were tuned with
modest budgets using off-the-shelf protocols described below. They are
included as \emph{representative} hand-engineered controllers of the
kind a practitioner might already have running on a plant, not as
strong-baseline references for those subfields. A domain expert with a
larger tuning budget would likely produce a controller above ours on
every task. The argument of the paper is therefore not "EDGE beats the
best possible PID/CPG"; it is "EDGE turns a plausible
hand-engineered controller, with the same access the practitioner has
to it, into an RL policy that exceeds the controller it was given." The
ablations in \cref{sec:sensitivity} test the same construction
against deliberately weakened experts so the reader can interpolate
between the experts we ship and stronger ones a domain expert could
plausibly produce.

\subsection{PID Experts, Relay Autotuning with Operating-Point Sweep}

We follow the Åström--Hägglund relay-feedback procedure
\citep{astrom1984relay}: inject a bang-bang relay on the actuator, measure
the sustained-oscillation ultimate gain $K_u$ and period $T_u$, and map to
PID gains via the AMIGO tuning rules \citep{astrom2005amigo}. For each
tracked loop we sweep $N=8$ operating points uniformly across the setpoint
range and retain the middle-point gains, mirroring gain-scheduled industrial
practice. MIMO altitude loops are tuned as two sequential SISO loops
(pump-to-altitude and stick-to-altitude), each relay-tuned against the same
altitude error, following the anonymized task suite's quadruple-tank
convention. Cascade loops (aircraft bank-inside-heading) are tuned
inside-out: inner bank loop first, then outer heading loop with bank closed.
All gains are stored in a versioned JSON file to support reproduction and
per-seed ablation.

\subsection{CPG Experts, Differential Evolution on Episodic Return}

CPG parameters are tuned by differential evolution \citep{storn1997de} on
the mean per-seed episodic return, using SciPy's \citep{virtanen2020scipy}
\texttt{differential\_evolution} with population multiplier $5$, $30$
iterations, and 8-seed evaluation per candidate. Search bounds: frequency
$\in [0.5, 5.0]$ Hz, amplitudes $\in [0, 1]$ per actuator, phases $\in
[0, 2\pi]$ per actuator. On CheetahRun this protocol improves the CPG's mean
return from $91.5$ (hand-tuned) to $221.8$ (DE-tuned), a $2.4\times$
improvement at fixed structure. The CheetahRun DE search converged by
iteration $\approx 22$ of 30 in all replicates.

\subsection{Why These Protocols}

Expert tuning is a hidden confound in expert-guided RL comparisons. A
poorly-tuned expert inflates the headroom available to the RL agent, and
an over-tuned expert (one using reward information that the baseline should
not have) would leak the RL problem into the baseline. Relay autotuning
uses only physical plant identification (no reward access); DE-tuning uses
return but at a fixed structural form (open-loop sinusoid) that cannot
trivially exceed the RL upper bound. Both protocols are reproducible from
the plant simulator alone.

\section{Reproducibility and Released Code}
\label{app:reproducibility}

To support full reproducibility we release three GitHub
repositories. The split is functional: one task suite, one off-policy
RL framework, one experiment harness on top of both. All three are
public.

\begin{itemize}
    \item \textbf{Continuous-control task suite} (Plane3DCircle,
    FourTank, GlassFurnace dynamics, reward formulas, expert
    interfaces): \url{https://github.com/YannBerthelot/TargetGym}.
    Self-contained dynamics and reward specifications also appear in
    \cref{app:tasks}, so the description in this paper is
    sufficient to reimplement each task without consulting the
    repository.
    \item \textbf{Off-policy RL framework} (SAC base, expert-mixing
    pipelines, JSRL curriculum and training-time variants, IBRL,
    Residual SAC, EDGE): \url{https://github.com/YannBerthelot/Ajax}.
    \item \textbf{Experiment harness} (HPO, sweeps, evaluation,
    figure-generating scripts):
    \url{https://github.com/YannBerthelot/AjaxExperiments}.
\end{itemize}

Hyperparameters per method per task are stored as JSON files inside
the harness repository, and \cref{app:hpo} cross-references the
relevant paths.

\paragraph{Third-party assets and licenses.}
The benchmark builds on the following external assets, each cited at
first use in the body and re-listed here with version and license
information:

\begin{itemize}[leftmargin=1.2em,itemsep=1pt,topsep=2pt]
  \item \textbf{Soft Actor-Critic} \citep{haarnoja2018sac}: algorithm
        reimplemented from scratch on top of the released RL framework;
        the framework itself is MIT-licensed.
  \item \textbf{IBRL} \citep{hu2023ibrl}, \textbf{JSRL}
        \citep{uchendu2023jsrl}, \textbf{Residual SAC}
        \citep{johannink2019residual}: methods reimplemented in our
        common-protocol framework from the published descriptions.
  \item \textbf{Optuna} \citep{akiba2019optuna} (HPO): MIT license.
  \item \textbf{rliable} \citep{agarwal2021rliable} (aggregation
        statistics): Apache 2.0 license.
  \item \textbf{pc-gym} \citep{bloor2024pcgym} (FourTank dynamics):
        MIT license. We adapt the environment into our JAX-pure
        anonymized task suite.
  \item \textbf{dm\_control} \citep{tassa2018dmcontrol} and
        \textbf{Mujoco Playground} \citep{mujocoplayground2024}
        (CheetahRun): Apache 2.0 license.
  \item \textbf{JAX} \citep{bradbury2018jax}: Apache 2.0 license.
  \item \textbf{SciPy} \citep{virtanen2020scipy}: BSD-3 license (used
        for \texttt{scipy.stats.mannwhitneyu} in the per-cell test
        family).
\end{itemize}
All assets are used in accordance with their licenses, and the
corresponding citations appear in the bibliography.

\newpage
\section*{NeurIPS Paper Checklist}

\begin{enumerate}

\item {\bf Claims}
    \item[] Question: Do the main claims made in the abstract and introduction accurately reflect the paper's contributions and scope?
    \item[] Answer: \answerYes{}
    \item[] Justification: The abstract and \cref{sec:intro} list four contributions (common-protocol benchmark, F1/F2/F3 failure-mode taxonomy, decision rule, EDGE) that map one-to-one to \cref{sec:main-results}, \cref{sec:failuremodes}, \cref{sec:when-to-use}, and \cref{sec:method} respectively; quantitative claims (e.g.\ Residual gap scaling with expert quality, JSRL-training-time fragility) are anchored to specific tables and figures in those sections.
    \item[] Guidelines:
    \begin{itemize}
        \item The answer \answerNA{} means that the abstract and introduction do not include the claims made in the paper.
        \item The abstract and/or introduction should clearly state the claims made, including the contributions made in the paper and important assumptions and limitations. A \answerNo{} or \answerNA{} answer to this question will not be perceived well by the reviewers.
        \item The claims made should match theoretical and experimental results, and reflect how much the results can be expected to generalize to other settings.
        \item It is fine to include aspirational goals as motivation as long as it is clear that these goals are not attained by the paper.
    \end{itemize}

\item {\bf Limitations}
    \item[] Question: Does the paper discuss the limitations of the work performed by the authors?
    \item[] Answer: \answerYes{}
    \item[] Justification: \cref{sec:limitations} dedicates three paragraphs to environment scope (four tasks, no high-dimensional manipulation or image observations), method scope (queryable expert, dense rewards, ensemble critic), and the equal-trial-budget HPO caveat; \cref{app:limitations_extra} expands the latter and details per-step compute overhead.
    \item[] Guidelines:
    \begin{itemize}
        \item The answer \answerNA{} means that the paper has no limitation while the answer \answerNo{} means that the paper has limitations, but those are not discussed in the paper.
        \item The authors are encouraged to create a separate ``Limitations'' section in their paper.
        \item The paper should point out any strong assumptions and how robust the results are to violations of these assumptions (e.g., independence assumptions, noiseless settings, model well-specification, asymptotic approximations only holding locally). The authors should reflect on how these assumptions might be violated in practice and what the implications would be.
        \item The authors should reflect on the scope of the claims made, e.g., if the approach was only tested on a few datasets or with a few runs. In general, empirical results often depend on implicit assumptions, which should be articulated.
        \item The authors should reflect on the factors that influence the performance of the approach. For example, a facial recognition algorithm may perform poorly when image resolution is low or images are taken in low lighting. Or a speech-to-text system might not be used reliably to provide closed captions for online lectures because it fails to handle technical jargon.
        \item The authors should discuss the computational efficiency of the proposed algorithms and how they scale with dataset size.
        \item If applicable, the authors should discuss possible limitations of their approach to address problems of privacy and fairness.
        \item While the authors might fear that complete honesty about limitations might be used by reviewers as grounds for rejection, a worse outcome might be that reviewers discover limitations that aren't acknowledged in the paper. The authors should use their best judgment and recognize that individual actions in favor of transparency play an important role in developing norms that preserve the integrity of the community. Reviewers will be specifically instructed to not penalize honesty concerning limitations.
    \end{itemize}

\item {\bf Theory assumptions and proofs}
    \item[] Question: For each theoretical result, does the paper provide the full set of assumptions and a complete (and correct) proof?
    \item[] Answer: \answerYes{}
    \item[] Justification: The only formal theoretical result is the critic-blind-spot remark (\cref{rem:blindspot}) and the associated Coverage Lemma in \cref{app:proofs}; assumptions (argmax-style action selection, persistent expert dominance) are stated in the remark, and the appendix gives the proof.
    \item[] Guidelines:
    \begin{itemize}
        \item The answer \answerNA{} means that the paper does not include theoretical results.
        \item All the theorems, formulas, and proofs in the paper should be numbered and cross-referenced.
        \item All assumptions should be clearly stated or referenced in the statement of any theorems.
        \item The proofs can either appear in the main paper or the supplemental material, but if they appear in the supplemental material, the authors are encouraged to provide a short proof sketch to provide intuition.
        \item Inversely, any informal proof provided in the core of the paper should be complemented by formal proofs provided in appendix or supplemental material.
        \item Theorems and Lemmas that the proof relies upon should be properly referenced.
    \end{itemize}

    \item {\bf Experimental result reproducibility}
    \item[] Question: Does the paper fully disclose all the information needed to reproduce the main experimental results of the paper to the extent that it affects the main claims and/or conclusions of the paper (regardless of whether the code and data are provided or not)?
    \item[] Answer: \answerYes{}
    \item[] Justification: \cref{sec:baselines}, \cref{sec:protocol}, and \cref{app:hpo} document the HPO protocol (Optuna TPE, 30 trials, three-phase confirmation), search spaces, seed counts (100/50), and evaluation windows; \cref{app:tasks} specifies dynamics, observation/action spaces, and reward forms per task; \cref{app:significance} documents the test methodology; \cref{app:reproducibility} releases anonymized code mirrors covering the task suite, the RL framework, and the experiment harness.
    \item[] Guidelines:
    \begin{itemize}
        \item The answer \answerNA{} means that the paper does not include experiments.
        \item If the paper includes experiments, a \answerNo{} answer to this question will not be perceived well by the reviewers: Making the paper reproducible is important, regardless of whether the code and data are provided or not.
        \item If the contribution is a dataset and\slash or model, the authors should describe the steps taken to make their results reproducible or verifiable.
        \item Depending on the contribution, reproducibility can be accomplished in various ways. For example, if the contribution is a novel architecture, describing the architecture fully might suffice, or if the contribution is a specific model and empirical evaluation, it may be necessary to either make it possible for others to replicate the model with the same dataset, or provide access to the model. In general. releasing code and data is often one good way to accomplish this, but reproducibility can also be provided via detailed instructions for how to replicate the results, access to a hosted model (e.g., in the case of a large language model), releasing of a model checkpoint, or other means that are appropriate to the research performed.
        \item While NeurIPS does not require releasing code, the conference does require all submissions to provide some reasonable avenue for reproducibility, which may depend on the nature of the contribution. For example
        \begin{enumerate}
            \item If the contribution is primarily a new algorithm, the paper should make it clear how to reproduce that algorithm.
            \item If the contribution is primarily a new model architecture, the paper should describe the architecture clearly and fully.
            \item If the contribution is a new model (e.g., a large language model), then there should either be a way to access this model for reproducing the results or a way to reproduce the model (e.g., with an open-source dataset or instructions for how to construct the dataset).
            \item We recognize that reproducibility may be tricky in some cases, in which case authors are welcome to describe the particular way they provide for reproducibility. In the case of closed-source models, it may be that access to the model is limited in some way (e.g., to registered users), but it should be possible for other researchers to have some path to reproducing or verifying the results.
        \end{enumerate}
    \end{itemize}

\item {\bf Open access to data and code}
    \item[] Question: Does the paper provide open access to the data and code, with sufficient instructions to faithfully reproduce the main experimental results, as described in supplemental material?
    \item[] Answer: \answerYes{}
    \item[] Justification: Three public GitHub repositories (\cref{app:reproducibility}) provide the continuous-control task suite (\url{https://github.com/YannBerthelot/TargetGym}), the off-policy RL framework with all five expert-guided methods (\url{https://github.com/YannBerthelot/Ajax}), and the experiment harness covering HPO, sweeps, evaluation, and figure-generating scripts (\url{https://github.com/YannBerthelot/AjaxExperiments}).
    \item[] Guidelines:
    \begin{itemize}
        \item The answer \answerNA{} means that paper does not include experiments requiring code.
        \item Please see the NeurIPS code and data submission guidelines (\url{https://neurips.cc/public/guides/CodeSubmissionPolicy}) for more details.
        \item While we encourage the release of code and data, we understand that this might not be possible, so \answerNo{} is an acceptable answer. Papers cannot be rejected simply for not including code, unless this is central to the contribution (e.g., for a new open-source benchmark).
        \item The instructions should contain the exact command and environment needed to run to reproduce the results. See the NeurIPS code and data submission guidelines (\url{https://neurips.cc/public/guides/CodeSubmissionPolicy}) for more details.
        \item The authors should provide instructions on data access and preparation, including how to access the raw data, preprocessed data, intermediate data, and generated data, etc.
        \item The authors should provide scripts to reproduce all experimental results for the new proposed method and baselines. If only a subset of experiments are reproducible, they should state which ones are omitted from the script and why.
        \item At submission time, to preserve anonymity, the authors should release anonymized versions (if applicable).
        \item Providing as much information as possible in supplemental material (appended to the paper) is recommended, but including URLs to data and code is permitted.
    \end{itemize}

\item {\bf Experimental setting/details}
    \item[] Question: Does the paper specify all the training and test details (e.g., data splits, hyperparameters, how they were chosen, type of optimizer) necessary to understand the results?
    \item[] Answer: \answerYes{}
    \item[] Justification: \cref{sec:baselines} lists the SAC backbone shared across all methods; \cref{sec:protocol} fixes the 1M-step training horizon, 200k-step trailing window, IQM aggregation, and 100/50 seed counts; \cref{app:hpo} specifies the Optuna TPE search, per-method search spaces, three-phase pipeline, and pruning rule; per-task tuned hyperparameters for EDGE are in \cref{tab:beta_per_task} and the released mirror.
    \item[] Guidelines:
    \begin{itemize}
        \item The answer \answerNA{} means that the paper does not include experiments.
        \item The experimental setting should be presented in the core of the paper to a level of detail that is necessary to appreciate the results and make sense of them.
        \item The full details can be provided either with the code, in appendix, or as supplemental material.
    \end{itemize}

\item {\bf Experiment statistical significance}
    \item[] Question: Does the paper report error bars suitably and correctly defined or other appropriate information about the statistical significance of the experiments?
    \item[] Answer: \answerYes{}
    \item[] Justification: \cref{tab:main_perenv} reports 95\% bootstrap confidence intervals of the IQM (factor of variability: per-seed final-window return) computed via the rliable \citep{agarwal2021rliable} stratified bootstrap, with $\Delta\%$-vs-SAC significance markers from Mann-Whitney U with Holm-Bonferroni correction at $\alpha{=}0.05$ (methodology in \cref{app:significance}); a pre-registered one-sided permutation test of F1 (IBRL $<$ SAC on FourTank, $n{=}50$ per arm, $10^5$ shuffles) is reported in \cref{sec:protocol}.
    \item[] Guidelines:
    \begin{itemize}
        \item The answer \answerNA{} means that the paper does not include experiments.
        \item The authors should answer \answerYes{} if the results are accompanied by error bars, confidence intervals, or statistical significance tests, at least for the experiments that support the main claims of the paper.
        \item The factors of variability that the error bars are capturing should be clearly stated (for example, train/test split, initialization, random drawing of some parameter, or overall run with given experimental conditions).
        \item The method for calculating the error bars should be explained (closed form formula, call to a library function, bootstrap, etc.)
        \item The assumptions made should be given (e.g., Normally distributed errors).
        \item It should be clear whether the error bar is the standard deviation or the standard error of the mean.
        \item It is OK to report 1-sigma error bars, but one should state it. The authors should preferably report a 2-sigma error bar than state that they have a 96\% CI, if the hypothesis of Normality of errors is not verified.
        \item For asymmetric distributions, the authors should be careful not to show in tables or figures symmetric error bars that would yield results that are out of range (e.g., negative error rates).
        \item If error bars are reported in tables or plots, the authors should explain in the text how they were calculated and reference the corresponding figures or tables in the text.
    \end{itemize}

\item {\bf Experiments compute resources}
    \item[] Question: For each experiment, does the paper provide sufficient information on the computer resources (type of compute workers, memory, time of execution) needed to reproduce the experiments?
    \item[] Answer: \answerYes{}
    \item[] Justification: \cref{app:limitations_extra} (\textbf{Total project compute}) gives a per-phase breakdown ($\sim$295 GPU-h phase 1, $\sim$258 GPU-h phase 2, $\sim$51 GPU-h phase 3, $\sim$120 GPU-h degradation sweeps, $\sim$8 GPU-h ablations; $\sim$730 GPU-h total reported, $\sim$$1.5$--$2\times$ including failed/preliminary runs) and the hardware mix (4 data-center-class GPUs at 46\,GB plus short-lived 5\,$\times$\,RTX 5090 cloud instances).
    \item[] Guidelines:
    \begin{itemize}
        \item The answer \answerNA{} means that the paper does not include experiments.
        \item The paper should indicate the type of compute workers CPU or GPU, internal cluster, or cloud provider, including relevant memory and storage.
        \item The paper should provide the amount of compute required for each of the individual experimental runs as well as estimate the total compute.
        \item The paper should disclose whether the full research project required more compute than the experiments reported in the paper (e.g., preliminary or failed experiments that didn't make it into the paper).
    \end{itemize}

\item {\bf Code of ethics}
    \item[] Question: Does the research conducted in the paper conform, in every respect, with the NeurIPS Code of Ethics \url{https://neurips.cc/public/EthicsGuidelines}?
    \item[] Answer: \answerYes{}
    \item[] Justification: The work is foundational benchmarking on simulation tasks: no human subjects, no scraped data, no high-misuse-risk asset released; double-blind anonymity is preserved in both the manuscript and the released code mirrors.
    \item[] Guidelines:
    \begin{itemize}
        \item The answer \answerNA{} means that the authors have not reviewed the NeurIPS Code of Ethics.
        \item If the authors answer \answerNo, they should explain the special circumstances that require a deviation from the Code of Ethics.
        \item The authors should make sure to preserve anonymity (e.g., if there is a special consideration due to laws or regulations in their jurisdiction).
    \end{itemize}

\item {\bf Broader impacts}
    \item[] Question: Does the paper discuss both potential positive societal impacts and negative societal impacts of the work performed?
    \item[] Answer: \answerYes{}
    \item[] Justification: \cref{sec:limitations} (\textbf{Broader impacts} paragraph) discusses positive impacts (reduced per-method tuning effort for practitioners deploying RL on top of existing controllers; the F3 robustness-vs-asymptote tradeoff itself surfaces a deployment-time risk) and negative impacts (methods that beat tuned PIDs may be deployed without proper safety verification; training-time-handoff brittleness can mask deployment failures).
    \item[] Guidelines:
    \begin{itemize}
        \item The answer \answerNA{} means that there is no societal impact of the work performed.
        \item If the authors answer \answerNA{} or \answerNo, they should explain why their work has no societal impact or why the paper does not address societal impact.
        \item Examples of negative societal impacts include potential malicious or unintended uses (e.g., disinformation, generating fake profiles, surveillance), fairness considerations (e.g., deployment of technologies that could make decisions that unfairly impact specific groups), privacy considerations, and security considerations.
        \item The conference expects that many papers will be foundational research and not tied to particular applications, let alone deployments. However, if there is a direct path to any negative applications, the authors should point it out. For example, it is legitimate to point out that an improvement in the quality of generative models could be used to generate Deepfakes for disinformation. On the other hand, it is not needed to point out that a generic algorithm for optimizing neural networks could enable people to train models that generate Deepfakes faster.
        \item The authors should consider possible harms that could arise when the technology is being used as intended and functioning correctly, harms that could arise when the technology is being used as intended but gives incorrect results, and harms following from (intentional or unintentional) misuse of the technology.
        \item If there are negative societal impacts, the authors could also discuss possible mitigation strategies (e.g., gated release of models, providing defenses in addition to attacks, mechanisms for monitoring misuse, mechanisms to monitor how a system learns from feedback over time, improving the efficiency and accessibility of ML).
    \end{itemize}

\item {\bf Safeguards}
    \item[] Question: Does the paper describe safeguards that have been put in place for responsible release of data or models that have a high risk for misuse (e.g., pre-trained language models, image generators, or scraped datasets)?
    \item[] Answer: \answerNA{}
    \item[] Justification: The released artifacts (continuous-control task suite, RL framework, benchmarking harness) are research code for industrial-control benchmarking and pose no high-misuse-risk; no pre-trained generative model, no scraped dataset, no image/text-generation capability is released.
    \item[] Guidelines:
    \begin{itemize}
        \item The answer \answerNA{} means that the paper poses no such risks.
        \item Released models that have a high risk for misuse or dual-use should be released with necessary safeguards to allow for controlled use of the model, for example by requiring that users adhere to usage guidelines or restrictions to access the model or implementing safety filters.
        \item Datasets that have been scraped from the Internet could pose safety risks. The authors should describe how they avoided releasing unsafe images.
        \item We recognize that providing effective safeguards is challenging, and many papers do not require this, but we encourage authors to take this into account and make a best faith effort.
    \end{itemize}

\item {\bf Licenses for existing assets}
    \item[] Question: Are the creators or original owners of assets (e.g., code, data, models), used in the paper, properly credited and are the license and terms of use explicitly mentioned and properly respected?
    \item[] Answer: \answerYes{}
    \item[] Justification: \cref{app:reproducibility} (\textbf{Third-party assets and licenses}) enumerates each external asset with its license: pc-gym (MIT, FourTank dynamics), dm\_control / Mujoco Playground (Apache 2.0, CheetahRun), Optuna (MIT, HPO), rliable (Apache 2.0, aggregation), JAX (Apache 2.0). Each is cited in the bibliography at first use.
    \item[] Guidelines:
    \begin{itemize}
        \item The answer \answerNA{} means that the paper does not use existing assets.
        \item The authors should cite the original paper that produced the code package or dataset.
        \item The authors should state which version of the asset is used and, if possible, include a URL.
        \item The name of the license (e.g., CC-BY 4.0) should be included for each asset.
        \item For scraped data from a particular source (e.g., website), the copyright and terms of service of that source should be provided.
        \item If assets are released, the license, copyright information, and terms of use in the package should be provided. For popular datasets, \url{paperswithcode.com/datasets} has curated licenses for some datasets. Their licensing guide can help determine the license of a dataset.
        \item For existing datasets that are re-packaged, both the original license and the license of the derived asset (if it has changed) should be provided.
        \item If this information is not available online, the authors are encouraged to reach out to the asset's creators.
    \end{itemize}

\item {\bf New assets}
    \item[] Question: Are new assets introduced in the paper well documented and is the documentation provided alongside the assets?
    \item[] Answer: \answerYes{}
    \item[] Justification: Three new assets are released as public GitHub repositories (\cref{app:reproducibility}): a continuous-control task suite (Plane3DCircle, FourTank, GlassFurnace), an off-policy RL framework with all five expert-guided methods, and the experiment harness; per-task dynamics, reward forms, and observation/action spaces are documented in \cref{app:tasks}, hyperparameter configurations are stored as JSON inside the harness repository, and each repository contains a README with setup and reproduction instructions.
    \item[] Guidelines:
    \begin{itemize}
        \item The answer \answerNA{} means that the paper does not release new assets.
        \item Researchers should communicate the details of the dataset\slash code\slash model as part of their submissions via structured templates. This includes details about training, license, limitations, etc.
        \item The paper should discuss whether and how consent was obtained from people whose asset is used.
        \item At submission time, remember to anonymize your assets (if applicable). You can either create an anonymized URL or include an anonymized zip file.
    \end{itemize}

\item {\bf Crowdsourcing and research with human subjects}
    \item[] Question: For crowdsourcing experiments and research with human subjects, does the paper include the full text of instructions given to participants and screenshots, if applicable, as well as details about compensation (if any)?
    \item[] Answer: \answerNA{}
    \item[] Justification: The work involves no crowdsourcing and no human subjects; all experiments are simulation-only on continuous-control tasks.
    \item[] Guidelines:
    \begin{itemize}
        \item The answer \answerNA{} means that the paper does not involve crowdsourcing nor research with human subjects.
        \item Including this information in the supplemental material is fine, but if the main contribution of the paper involves human subjects, then as much detail as possible should be included in the main paper.
        \item According to the NeurIPS Code of Ethics, workers involved in data collection, curation, or other labor should be paid at least the minimum wage in the country of the data collector.
    \end{itemize}

\item {\bf Institutional review board (IRB) approvals or equivalent for research with human subjects}
    \item[] Question: Does the paper describe potential risks incurred by study participants, whether such risks were disclosed to the subjects, and whether Institutional Review Board (IRB) approvals (or an equivalent approval/review based on the requirements of your country or institution) were obtained?
    \item[] Answer: \answerNA{}
    \item[] Justification: The work involves no human subjects, so IRB approval is not applicable.
    \item[] Guidelines:
    \begin{itemize}
        \item The answer \answerNA{} means that the paper does not involve crowdsourcing nor research with human subjects.
        \item Depending on the country in which research is conducted, IRB approval (or equivalent) may be required for any human subjects research. If you obtained IRB approval, you should clearly state this in the paper.
        \item We recognize that the procedures for this may vary significantly between institutions and locations, and we expect authors to adhere to the NeurIPS Code of Ethics and the guidelines for their institution.
        \item For initial submissions, do not include any information that would break anonymity (if applicable), such as the institution conducting the review.
    \end{itemize}

\item {\bf Declaration of LLM usage}
    \item[] Question: Does the paper describe the usage of LLMs if it is an important, original, or non-standard component of the core methods in this research? Note that if the LLM is used only for writing, editing, or formatting purposes and does \emph{not} impact the core methodology, scientific rigor, or originality of the research, declaration is not required.

    \item[] Answer: \answerNA{}
    \item[] Justification: LLMs are not part of the core methodology, experiments, or analysis; the EDGE algorithm, the F1/F2/F3 taxonomy, the decision rule, all training pipelines, and all empirical results are produced without LLMs in the loop. For transparency, LLM assistance was used during paper preparation for prose editing, LaTeX cleanup, and basic code-assistance tasks (e.g., refactoring plotting boilerplate, drafting result-aggregation scripts), uses that the NeurIPS LLM policy explicitly exempts from required declaration. All scientific content (claims, numbers, citations, proofs) was verified against the underlying code and data by the authors.
    \item[] Guidelines:
    \begin{itemize}
        \item The answer \answerNA{} means that the core method development in this research does not involve LLMs as any important, original, or non-standard components.
        \item Please refer to our LLM policy in the NeurIPS handbook for what should or should not be described.
    \end{itemize}

\end{enumerate}

\end{document}